\documentclass{article}

\PassOptionsToPackage{numbers}{natbib}

\usepackage[preprint]{neurips_2026}

\usepackage[utf8]{inputenc} 
\usepackage[T1]{fontenc}    
\usepackage{hyperref}       
\usepackage{url}            
\usepackage{booktabs}       
\usepackage{amsfonts}       
\usepackage{amsmath}
\usepackage{mathtools}
\usepackage{nicefrac}       
\usepackage{microtype}      
\usepackage{xcolor}         
\usepackage{graphicx}
\usepackage{tikz}
\usepackage{subcaption}
\usepackage[capitalize]{cleveref}
\usepackage{overpic}
\usepackage{amsthm, amssymb}
\usepackage{wrapfig}
\usepackage{placeins}

\usetikzlibrary{shapes, arrows, positioning, calc}

\definecolor{pastelPurple}{RGB}{235, 215, 255}
\definecolor{pastelGreen}{RGB}{220, 255, 235}
\definecolor{pastelBlue}{RGB}{210, 240, 255}
\definecolor{circleBorder}{RGB}{120, 180, 220} 

\definecolor{sapienzabordeaux}{RGB}{130, 36, 51}
\definecolor{sapienzapetrolgreen}{RGB}{0, 103, 120}

\definecolor{donorFT}{HTML}{4ab191}
\definecolor{donorQAT}{HTML}{8c84e0}
\definecolor{donorQV}{HTML}{C026D3}
\definecolor{PT}{HTML}{808080}
\definecolor{PTQ}{HTML}{e86f6e}
\definecolor{cream}{HTML}{faf9f5}

\newtheorem{proposition}{Proposition}
\newtheorem{theorem}{Theorem}
\newtheorem{lemma}{Lemma}

\tikzset{
    block/.style={
        rectangle,
        draw=none,
        rounded corners=7pt,
        minimum height=1.7cm,
        minimum width=2.4cm,
        align=center,
        font=\small\sffamily,
        text=black
    },
    checkpoint/.style={block, fill=pastelPurple},
    ptq/.style={block, fill=pastelGreen},
    decision/.style={block, fill=pastelPurple, minimum width=2.7cm},
    QAT/.style={block, fill=pastelGreen},
    deploy/.style={block, fill=pastelBlue},
    container/.style={
        circle,
        draw=circleBorder,
        thin,
        transparent, 
        fill=none,
        opacity=0,
        inner sep=3pt,
        text opacity=1,
        font=\bfseries\small\sffamily,
        text=black
    },
    arrow/.style={
        ->,
        >=stealth,
        thin,
        draw=black
    },
    annot/.style={
        align=center,
        font=\small\sffamily,
        text=black
    },
    annot_arrow/.style={
        ->,
        >=stealth,
        draw=circleBorder,
        thick
    }}
\title{Zero-Shot Quantization via Weight-Space Arithmetic}

\author{%
  Daniele Solombrino\\
  Sapienza University of Rome\\
  \And
  Antonio Andrea Gargiulo \\
  Sapienza University of Rome \\
  \And
  Alessandro Zirilli \\
  Sapienza University of Rome \\
  \And
  Luca Zhou \\
  Sapienza University of Rome \\
  \And
  Adrian Robert Minut \\
  Sapienza University of Rome \\
  \And
  Emanuele Rodolà \\
  Sapienza University of Rome / Paradigma \\
}

\begin{document}

\maketitle

\begin{abstract}
We show that robustness to post-training quantization (PTQ) is a transferable direction in weight space. We call this direction the \emph{quantization vector}: extracted from a donor task by simple weight-space arithmetic, it can be used to patch a receiver model and improve post-PTQ Top-1 accuracy by up to \(60\)$\%$ in a 3-bit setting, without receiver-side quantization-aware training (QAT). Because the method requires no receiver training data, it provides a zero-shot, low-cost alternative to QAT for extremely low-bit deployment. Across multiple vision and language models and more than $30$ tasks, donor quantization vectors often yield substantial gains even when donor and receiver tasks differ markedly. We further prove rigorously that quantization vectors are well-defined and do not suffer from reparameterization symmetries, and provide a local geometric account of their effects. Together, these results suggest that quantization robustness can be partially isolated, reused, and transferred through simple weight-space algebra.
\end{abstract}

\section{Introduction}
Deep neural networks are typically trained and stored in high-precision floating-point formats, which impose substantial memory footprint and bandwidth demands during deployment. 
Integer quantization addresses this bottleneck by representing model parameters with extremely low-bit integers, reducing storage and inference costs. 
Among the available quantization strategies, post-training quantization (PTQ) \citep{nagel2021white, gholami2022survey} is particularly attractive because it can be applied to an already trained model without requiring additional optimization or access to the original training data.
However, at extremely low bit-widths, PTQ can severely distort the learned parameters, causing a marked degradation in downstream task performance.

Quantization-aware training (QAT) \citep{jacob2018quantization} mitigates this issue by exposing the model to quantization effects during optimization, allowing the parameters to adapt to low-bit perturbations.
As a result, QAT often recovers a large fraction of the accuracy lost by naive PTQ.
The drawback is that it must be run separately for each receiver task, increasing the data, compute, and engineering cost of deployment.
This creates a practical trade-off between efficiency and robustness: PTQ is cheap but fragile at very low bit-widths, whereas QAT is robust but costly.
This raises a natural question: can the robustness learned through QAT on one task be isolated and \emph{reused} on another?
If so, one could improve low-bit performance on a receiver model without receiver-side quantization training.

We investigate this question through the lens of weight-space arithmetic \citep{ilharco2022editing}.
Given a donor task with both standard and QAT checkpoints, we define the \emph{quantization vector} (QV) as the displacement between them.
We then patch a receiver checkpoint by adding the scaled donor QV.
The central hypothesis is that QAT leaves behind a reusable displacement toward a more quantization-favorable region of weight space; as illustrated in Figure~\ref{fig:killer_figure}, donor QV patching is intended to occupy the middle ground between vanilla PTQ and full receiver-side QAT.

\begin{wrapfigure}[23]{r}{6.5cm}
\vspace{-0.15cm}
    \centering
    \begin{overpic}[width=.9\linewidth]{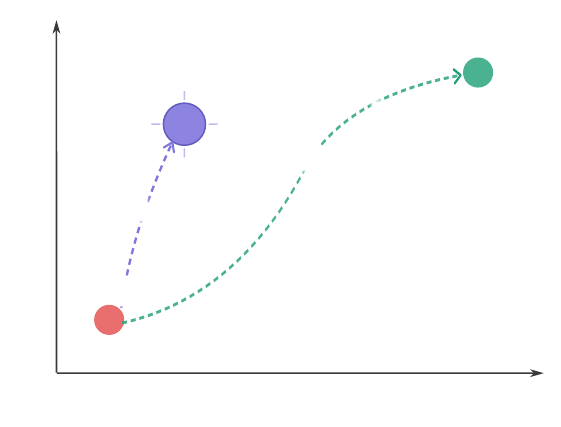}
    \put(10,5){\fontsize{7}{8}\selectfont $2.59 \times 10^{-4}$}
    \put(78,5){\fontsize{7}{8}\selectfont $2.86 \times 10^{4}$}
    \put(30,0){\fontsize{7}{8}\selectfont Computational cost (TFLOPs)}
    \put(0.5,12){\fontsize{7}{8}\selectfont 25\%}
    \put(0,66){\fontsize{7}{8}\selectfont 97\%}
    \put(3,22){\fontsize{7}{8}\selectfont \rotatebox{90}{3-bit Top-1 accuracy}}
    \put(81,67){\fontsize{7}{8}\selectfont \textcolor{donorFT}{QAT}}
    \put(22,60){\fontsize{7}{8}\selectfont \textcolor{donorQAT}{QV-patched}}
    \put(15,23){\fontsize{7}{8}\selectfont \textcolor{PTQ}{PTQ}}
    \put(14,37.5){\fontsize{7}{8}\selectfont \textcolor{donorQAT}{add donor QV}}
    \put(36,46.5){\fontsize{7}{8}\selectfont \textcolor{donorQAT}{$\theta_{\mathcal{R} \leftarrow \mathcal{D}} = \theta_{\mathcal{R}} + \lambda \rho_{\mathcal{D}}$}}
    \end{overpic}
    \caption{\textbf{Zero-shot QV patching.} A donor quantization vector $\rho_{\mathcal{D}} \coloneq \theta_{\mathcal{D},\text{QAT}} - \theta_{\mathcal{D}}$, extracted as the weight-space displacement between a standard fine-tuned donor checkpoint and its QAT counterpart, is added to a receiver checkpoint to obtain the patched model $\theta_{\mathcal{R} \leftarrow \mathcal{D}} = \theta_{\mathcal{R}} + \lambda \rho_{\mathcal{D}}$. The plot is not to scale but uses true numbers; it illustrates the intended operating regime of our method, namely improving low-bit accuracy over PTQ without paying the full receiver-side cost of QAT.\label{fig:killer_figure}}
\end{wrapfigure}

Two theoretical questions arise immediately.
First, is QV itself a legitimate vector in a common parameter space, or can fine-tuning inject hidden symmetries that make coordinate-wise subtraction ambiguous?
Second, once a donor QV is well-defined, what determines whether it transfers successfully to a receiver?

We address both aspects in this paper.
We show that quantization vectors are generically well-defined: continuous linear reparameterization symmetries are broken by the training dynamics, while permutation symmetries occur only on a measure-zero set of common initializations.

We then show that donor transfer admits a simple local theory: under a quadratic model of the receiver's post-quantization objective, optimal donor patching becomes a projection problem in the receiver's geometry: the optimal scale is the projection coefficient onto the donor direction, and the recoverable fraction of the receiver's own QAT gain is exactly a cosine-squared alignment term between donor and receiver QVs.
This gives a concrete explanation for two empirical patterns that recur throughout the paper: some donors transfer much better than others, and the scaling factor \(\lambda\) can determine whether the same donor direction helps or hurts.
This perspective complements recent geometry-based views of QAT and weight-space arithmetic \citep{ilharco2022editing, zhou2025task, tabesh2025cage}.
However, rather than modeling QAT as a task-specific optimization procedure that must be repeated from scratch, we ask whether its net effect can be isolated as a reusable displacement and transferred across tasks.

We evaluate this framework across multiple ViT scales \citep{dosovitskiy2020image}, text encoders, vision, and text tasks.
Empirically, donor QVs often improve receiver robustness to 3-bit PTQ, and the gains can be large even when donor and receiver tasks differ substantially.
Moreover, the role of the scaling factor predicted by the theory is clearly visible in practice: unit-scale transfer is already often effective, while tuning \(\lambda\) removes much of the destructive interference that appears when a fixed step size is imposed across all donor--receiver pairs.
%

Our contributions are fourfold:
\begin{itemize}
    \item We introduce \emph{quantization vectors}, weight-space displacements between matched standard and QAT checkpoints that operationalize QAT-induced robustness as a transferable object.
    \item We rigorously prove that quantization vectors are \emph{well-defined}: coordinate-wise subtraction does not generically suffer from hidden symmetry injection during fine-tuning; we also extend the result to ordinary task vectors outside the quantized regime.
    \item We propose a \emph{zero-shot} cross-task patching framework and a local geometric theory of donor transfer: under a quadratic model of the receiver objective, the best donor patch recovers a cosine-squared fraction of the receiver's own QAT gain.
    \item Across multiple vision/text tasks and model scales, we show that QV patching often substantially improves 3-bit PTQ performance over vanilla PTQ, and that scale calibration makes transfer markedly more reliable.
\end{itemize}

\section{Related Work}
\subsection{Loss Landscape Geometry \& Low-Bit Quantization Robustness} 
Quantization has long been studied to reduce the memory and computational costs of neural network inference, with post-training quantization and quantization-aware training as the two main branches. 
PTQ is attractive because it can be applied after standard training, does not require access to the original training data, and often uses a small set of calibration data.
However, its accuracy can deteriorate sharply at very low bit-widths (3 bits and below).
QAT addresses this limitation by incorporating quantization effects during optimization, typically yielding better low-bit performance at the expense of additional data and compute requirements \citep{ptqQATsurvey}.

This trade-off becomes particularly challenging in ViTs \citep{dosovitskiy2020image}, where PTQ suffers due to non-standard activation distributions. 
As a result, many methods have focused on specialized quantizer design, calibration rules \citep{ptq4vit, repqvit, adalog}, or data-free approximation strategies such as MimiQ \citep{mimiq} and DFQ-ViT \citep{dfqvit}. Despite all these advancements, such approaches often still require optimization. 
%
%
Recent literature suggests that the effect of quantization is tightly linked to the local geometry of the loss landscape \citep{catalan2025training}.
Hessian-based methods show that curvature can predict sensitivity to low-precision perturbations \citep{dong2019hawq,dong2020hawq}, while other studies \citep{nahshan2021lapq} have demonstrated that aggressive low-bit quantization can induce highly non-smooth optimization landscapes.
In ViTs, recent studies have further emphasized the irregularity of quantized loss surfaces and the role of geometry in determining quantization difficulty \citep{frumkin2023jumping}.

On the training side, \citet{liu2021saq} connects quantization to sharpness-aware optimization.
Instead, recent works argue that QAT, or quantization-induced noise, can guide optimization toward flatter minima or a lower Hessian norm \citep{wang2022squat,javed2024qtdog,catalan2025training}.
Another work by \citet{tabesh2025cage} further formalizes QAT as a multi-objective optimization problem that seeks a Pareto-optimal point between task loss minimization and quantization constraints, achieved through a curvature-aware correction term derived from the local Hessian.
Our method is fully consistent with these views; it differs only in its purpose: rather than optimizing a task-specific training recipe, we ask whether the \emph{displacement} induced by such geometry-aware adaptation can be isolated as a reusable direction in parameter space and transferred across tasks.

\subsection{Arithmetic in Weight Space}
Recent studies indicate that neural networks trained from a common initialization often remain in a compatible basin \citep{wortsman2022model}. Task Arithmetic \citep{ilharco2022editing} defines task vectors (TVs) as the difference between a fine-tuned and a pretrained model, showing that they can be used to modify model behavior. Generally, this line of work treats coordinate-wise differences as well-defined as long as the initialization is shared. As an additional contribution, we state this assumption and provide a proof in our setting.
Similarly, \citet{cai2023robust} uses weight-space directions to inject robustness to input corruptions. Other methods improve weight-space arithmetic by finding optimal combinations of TVs \citep{yang2024adamerging}, mitigating sign disagreement \citep{ties}, randomly dropping updates \citep{yu2024language}, using evolutionary strategies \citep{sakana, mencattinimerge}, or working at the layer level to better respect network structure \citep{stoica2024model, gargiulo2025task, Iso-C}.

We extend this perspective to quantization: instead of encoding a \textit{task}, our QV encodes the \textit{structural robustness} needed to survive low-precision quantization. Our method also differs from \citet{quantizedtaskvectors} where task vectors are quantized to achieve cheaper model merging; instead, in this manuscript, we define \emph{non-quantized displacements to improve quantization robustness}.
To our knowledge, this is the first work to isolate the displacement induced by QAT as a reusable, zero-shot patch for cross-task transfer of quantization robustness.

\section{Background}

In this section, we review the foundational concepts necessary to formalize our framework: a fixed \emph{fake-quantization} operator, working at low-bit precision, and vector arithmetic, operating in weight-space, between checkpoints that share a common pretrained initialization.

\subsection{Symmetric Per-Channel Weight Quantization}

In this section we focus on weights-only  quantization, as the quantity we transfer later represents a displacement in parameter space. Consider a linear layer with a weight matrix \(W \in \mathbb{R}^{d_{\mathrm{out}} \times d_{\mathrm{in}}}\). In the setting of this paper, each output channel of \(W\) is quantized independently using a signed symmetric integer grid. For a given bit-width \(b\), the range of representable integers is defined as:
\begin{equation}
q_{\min} = -2^{b-1},
\qquad
q_{\max} = 2^{b-1} - 1.
\end{equation}
For channel \(i\), we set the scale based on the largest absolute weight in that channel and quantize by:
\begin{equation}
s_i = \frac{\max_j |W_{ij}|}{q_{\max}},
\qquad
\widehat{W}_{ij} = \mathrm{clip}\!\left(\left\lfloor \frac{W_{ij}}{s_i} \right\rceil, q_{\min}, q_{\max}\right).
\label{eq:symmetric_quantization_formula}
\end{equation}
This quantization process ensures that the weights are scaled appropriately, with values clipped to a specified minimum and maximum.
To convert the integer tensor back to floating point, we use the dequantization process defined as follows:
\begin{equation}
\widetilde{W}_{ij} = s_i \widehat{W}_{ij},
\qquad
\mathrm{FQ}(W) = \widetilde{W}.
\label{eq:quantize_dequantize_roundtrip}
\end{equation}
Here \(\lfloor \cdot \rceil\) denotes rounding to the nearest integer, while the \(\mathrm{clip}\) function truncates values to fit the representable range. By extension, \(\mathrm{FQ}(\theta)\) refers to the checkpoint obtained by applying fake quantization (FQ) to every quantized linear weight tensor in \(\theta\), while leaving the remaining parameters unchanged.
Note that per-channel quantization commutes with permutations \(g\), i.e. $\mathrm{FQ}(g\cdot W) = g\cdot \mathrm{FQ}(W)$. This desirable property will be instrumental in the theorems that follow.

The specific quantizer matters for our methodology. A different bit-width, granularity, or quantization rule could lead to varying perturbations during QAT, and consequently results in different weight-space displacements. We focus on the symmetric per-channel case because is the operator employed both to create donor QAT checkpoints and to evaluate patched receivers. Throughout the paper, \(\mathrm{FQ}\) specifically refers to 3-bit symmetric per-channel weight quantization.
During QAT, the same fake-quantization operator is incorporated into the forward pass while training, meaning the model is optimized under the perturbation induced by Eq.~\ref{eq:symmetric_quantization_formula}. Since rounding and clipping are not differentiable operations, gradients are approximated with the straight-through estimator \citep{bengio2013estimating,jacob2018quantization}. 
In the next section, we will isolate the parameter displacement between a standard fine-tuned checkpoint and a checkpoint trained to withstand this same fake-quantization noise.

\subsection{Weight-Space Arithmetic}
The second component of our framework is weight-space arithmetic, which studies linear operations between models that share a common initialization. These checkpoints often remain within a compatible basin, allowing their parameter differences to be composed.
Let \(\theta_{\text{pre}}\) denote a pretrained checkpoint and let \(\theta_{\text{task}}\) be the result of fine-tuning it for a specific downstream task. Task arithmetic \citep{ilharco2022editing} represents this adaptation by the displacement
\begin{equation}
\tau_{\text{task}} = \theta_{\text{task}} - \theta_{\text{pre}}\,,
\label{eq:task_vector_equation}
\end{equation}
which is referred to as a \emph{task vector} (TV).
When checkpoints share the same architecture and initialization, TVs are treated as vectors in a common parameter space. They can be added to other compatible checkpoints, producing meaningful and controllable changes in model behavior \citep{wortsman2022model}.
Our method adopts this viewpoint but focuses on a different source of variation. Instead of measuring the change from pretraining to task adaptation, we assess the change \emph{from standard fine-tuning to QAT on the same task} (as we will show in Eq. \eqref{eq:quantization_vector} below).

\paragraph{Well-definedness of the TV.}
%
%
It is worth noting that, while the difference $\theta_{\text{task}} - \theta_{\text{pre}}$ is algebraically valid, fine-tuning could in principle inject hidden \emph{symmetries} (e.g., neuron permutations) that would place the two checkpoints in different gauges. To our surprise, no theoretical result in the broad task arithmetic literature rigorously rules this out, so we set out to fill this gap.
In Theorem \ref{thm:adamw_signed_permutation} (Appendix \ref{app:wd}), we prove that, in the ViT regime studied here, this potential issue is generically absent as AdamW dynamics break continuous linear reparameterization symmetries. We prove this also holds true under QAT fine-tuning, and furthermore, mismatched permutation gauges between matched full-precision and QAT endpoints happen with probability zero, as they occur only on a measure-zero set of common initializations. This ensures that our construction below (in particular Eq. \eqref{eq:quantization_vector}) is well-defined in the ambient backbone parameter space.

\section{Method}

The core idea behind our method is to treat robustness to low-bit quantization not as a per-task training-time constraint that must be relearned for each downstream model, but rather as a {\em transferable} geometric alignment in weight space of a fixed architecture. 
We approach this in two stages: first, we define the quantization vector as the difference between the matched full-precision and QAT checkpoints, and then we describe how to transfer this vector to a receiver model.

\subsection{Quantization Vector}
\label{sec:qv}

Let \(\mathcal{D}\) denote a \textit{donor} dataset, and let \(\theta_{\mathcal{D},\text{QAT}}\) and \(\theta_{\mathcal{D}}\) be the checkpoints obtained by fine-tuning the same pretrained backbone \(\theta_{\text{pre}}\) on \(\mathcal{D}\) with and without QAT, respectively. 
Both fine-tunings additionally share the same random head initialization through a common training seed.
\footnote{We write \(\theta\) for the shared backbone coordinates, since all quantization and vector operations act only on the backbone. 
}

We define the \emph{quantization vector} (QV) as the displacement given by the equation:
\begin{equation}
    \rho_{\mathcal{D}} = \theta_{\mathcal{D},\text{QAT}} - \theta_{\mathcal{D}}\,.
    \label{eq:quantization_vector}
\end{equation}

By comparing Eqs.~\ref{eq:task_vector_equation} and~\ref{eq:quantization_vector}, we can see that this definition of the QV parallels the notion of TVs in weight-space arithmetic; however, the two displacements represent different effects.
The TV in Eq.~\ref{eq:task_vector_equation} captures the change in parameters from a pretrained model to a task-specific solution. In contrast, the QV in Eq.~\ref{eq:quantization_vector} represents the change from a standard fine-tuned solution to one that is more robust to low-bit quantization. Our hypothesis is that this displacement isolates \emph{structural robustness} to quantization noise without substantially altering the learned task behavior.

This motivates a geometric view of QAT: it moves parameters toward a solution that better balances task performance and quantization robustness. This is consistent with recent work framing QAT as a multi-objective problem \citep{tabesh2025cage}; however, unlike that work, we do not seek to model or improve the optimization process; instead, we focus on isolating its net effect in weight space and study the \emph{transferability} of quantization awareness. The QV is our operational representation of this effect.

\paragraph{Well-Definedness of The QV.}
As anticipated, a potential concern with any displacement in weight space is gauge mismatch: two functionally comparable checkpoints may differ by hidden parameter symmetries, which can make coordinate-wise subtraction ambiguous. In our setting, this issue is generically absent, as demonstrated by the following theorem.

\begin{theorem}[Quantization vectors are generically well-defined]
\label{thm:qv_well_defined}
Fix the donor task \(\mathcal{D}\) and any realization of the stochastic components of training. Let \(\Phi_\mathcal{D}^{\mathrm{FT}}\) and \(\Phi_\mathcal{D}^{\mathrm{QAT}}\) denote the resulting finetuning and its corresponding QAT endpoint maps on the trainable backbone parameters. Assume that the only exact symmetries preserved by the regime form a finite permutation group \(\mathcal G\), and that \(\Phi_\mathcal{D}^{\mathrm{FT}}\) and \(\Phi_\mathcal{D}^{\mathrm{QAT}}\) are piecewise real-analytic. If no nontrivial \(g \in \mathcal G\) identifies the two maps on an open set, then for almost every common initialization \(\theta_{\mathrm{pre}}\), the matched endpoints
\[
\theta_\mathcal{D} = \Phi_\mathcal{D}^{\mathrm{FT}}(\theta_{\mathrm{pre}}),
\qquad
\theta_{\mathcal{D},\mathrm{QAT}} = \Phi_\mathcal{D}^{\mathrm{QAT}}(\theta_{\mathrm{pre}})
\]
lie in the same gauge. Consequently,
\[
\rho_\mathcal{D} = \theta_{\mathcal{D},\mathrm{QAT}} - \theta_\mathcal{D}
\]
is a well-defined parameter-space displacement.
\end{theorem}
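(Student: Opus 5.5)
We reduce the statement to a measure-zero argument for the zero sets of real-analytic functions. First we make precise what "lie in the same gauge" means. Because the only exact symmetries of the regime form the finite group \(\mathcal G\), a gauge mismatch between the matched endpoints means the following: some nontrivial \(g \in \mathcal G\) makes \(g \cdot \theta_{\mathcal D,\mathrm{QAT}}\) a closer (canonical) representative of \(\theta_{\mathcal D}\) than \(\theta_{\mathcal D,\mathrm{QAT}}\) itself. Equivalently, the QAT trajectory has been carried into the \(g\)-image of the fine-tuning gauge, so that
\[
\Phi_\mathcal{D}^{\mathrm{QAT}}(\theta_{\mathrm{pre}}) = g \cdot \Phi_\mathcal{D}^{\mathrm{FT}}(\theta_{\mathrm{pre}})
\]
holds up to the gauge-fixed (canonical-form) component. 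We therefore define the bad set
\[
B = \bigcup_{g \in \mathcal G\setminus\{e\}} B_g,
\qquad
B_g = \bigl\{\theta_{\mathrm{pre}} : \Phi_\mathcal{D}^{\mathrm{QAT}}(\theta_{\mathrm{pre}}) = g\cdot \Phi_\mathcal{D}^{\mathrm{FT}}(\theta_{\mathrm{pre}})\bigr\}.
\]
We then show that \(B\) is Lebesgue-null. Since \(\mathcal G\) is finite, it suffices to show that each \(B_g\) is null.

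Fix \(g \neq e\). By piecewise real-analyticity, the parameter space is covered, up to a null boundary set, by countably many open pieces \(U_k\). On each \(U_k\) both endpoint maps are real-analytic. This covers the rounding and clipping discontinuities of \(\mathrm{FQ}\), which are confined to lower-dimensional boundaries. On \(U_k\), consider \(F_g = \Phi_\mathcal{D}^{\mathrm{QAT}} - g\cdot\Phi_\mathcal{D}^{\mathrm{FT}}\). This map is real-analytic, since \(g\) acts linearly by permuting coordinates. Its squared norm \(h_g = \|F_g\|^2\) is a real-valued analytic function on \(U_k\), and \(B_g \cap U_k\) is exactly the zero set of \(h_g\). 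We may take each \(U_k\) connected. The standard fact that a real-analytic function on a connected open set either vanishes identically or has a zero set of measure zero then gives the dichotomy. Vanishing identically would mean \(g\) identifies the two maps on the open set \(U_k\), which the hypothesis forbids. Hence \(B_g \cap U_k\) is null. Taking the countable union over \(k\), and adding the null piece boundaries, gives \(\lambda(B_g)=0\). The finite union over \(g\) then gives \(\lambda(B)=0\).

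For every \(\theta_{\mathrm{pre}} \notin B\), no nontrivial symmetry relates the two endpoints. The coordinate system in which \(\theta_\mathcal{D}\) is expressed is then also the natural one for \(\theta_{\mathcal D,\mathrm{QAT}}\), so \(\rho_\mathcal{D}\) is a well-defined vector in the ambient backbone space. Because the only exact symmetries are in \(\mathcal G\), continuous reparameterizations need no separate treatment here: they are excluded by the assumption, which is justified elsewhere by the AdamW symmetry-breaking result (Theorem~\ref{thm:adamw_signed_permutation}).

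\textbf{Main obstacle.} The main difficulty is the first step: formalizing "same gauge" so that a mismatch is genuinely captured by the analytic equations \(F_g=0\), rather than by an inequality such as "\(g\cdot\theta_{\mathrm{QAT}}\) is closer to \(\theta_{\mathcal D}\)." If the paper's notion is instead the nearest-representative one, we would replace \(B_g\) by a tie or crossover set \(\{\|\theta_{\mathrm{QAT}}-\theta_{\mathcal D}\| = \|g\theta_{\mathrm{QAT}}-\theta_{\mathcal D}\|\}\). That set is again the zero set of an analytic function, and the same dichotomy applies. We would then have to argue, for example via continuity from the shared initialization, where both maps coincide at zero training time, that the identity is the selected representative outside this null set. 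A secondary technical point is ensuring the pieces are countable and connected with null boundaries, which we take as part of the definition of piecewise real-analytic.
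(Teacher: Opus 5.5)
Your proposal is correct and follows essentially the same route as the paper: the bad set is the finite union over nontrivial $g\in\mathcal G$ of the equality sets $\{\Phi_\mathcal{D}^{\mathrm{QAT}} = g\,\Phi_\mathcal{D}^{\mathrm{FT}}\}$. Each of these lies in the piece boundaries plus the zero set of the real-analytic function $h_g=\|\Phi_\mathcal{D}^{\mathrm{QAT}}-g\,\Phi_\mathcal{D}^{\mathrm{FT}}\|^2$, which is null because the hypothesis rules out $h_g\equiv 0$ on an open set. The differences are cosmetic: the paper uses a finite partition into full-dimensional cells where you use countably many connected pieces, which works equally well; and the paper cites the AdamW symmetry-breaking theorem and the commutation of $\mathrm{FQ}$ with $\mathcal G$ inside the proof, whereas you rightly absorb both into the stated hypothesis.
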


\noindent\emph{Proof sketch.}
Appendix~\ref{app:wd} shows that AdamW admits no nontrivial continuous linear equivariances, so the only surviving exact gauge is discrete; because we use per-channel quantization, the fake quantizer (FQ) of our QAT commutes with that same finite permutation gauge; and the event that the finetuning (FT) and its matched QAT land in different gauges is the zero set of a nontrivial piecewise analytic function, hence measure zero.

Theorem~\ref{thm:qv_well_defined} justifies treating Eq.~\eqref{eq:quantization_vector} as an ordinary vector difference in the ambient backbone parameter space. Proposition~\ref{prop:alignment_controls_transfer} below studies the complementary question of when a well-defined donor QV transfers successfully to a receiver.

\subsection{Zero-Shot Patching}

Let \(\mathcal{R}\) denote a \textit{receiver} task for which we only possess a standard checkpoint \(\theta_{\mathcal{R}}\) trained without QAT.
Given a donor QV \(\rho_{\mathcal{D}}\), we construct a patched receiver, with improved resilience to PTQ-induced noise, by adding the donor displacement to the receiver checkpoint:
\begin{equation}
    \theta_{\mathcal{R} \leftarrow \mathcal{D}} = \theta_{\mathcal{R}} + \lambda \rho_{\mathcal{D}} \,.
    \label{eq:qv_patching}
\end{equation}
Here, \(\lambda \in \mathbb{R}\) is a scaling coefficient modulating the intensity of the robustness patch.
Intuitively, this operation moves the receiver parameters in a direction that was previously learned to improve robustness to PTQ noise on the donor task.

This protocol assumes that the receiver and donor share the same pretrained backbone initialization.
%
Task-specific heads may differ across tasks and are not part of the transferred vector.
Under this assumption, the corresponding backbone checkpoints lie within a compatible loss basin of the weight space \citep{wortsman2022model}, making linear transfer meaningful.
The resulting procedure is zero-shot and data-free on the receiver side, since it requires \emph{no access to \(\mathcal{R}\)'s training data}, relying entirely on the pre-computed weight-space arithmetic.
\begin{wrapfigure}[23]{r}{6.5cm}
\centering
    \begin{overpic}[width=.95\linewidth,trim=20 0 0 0]{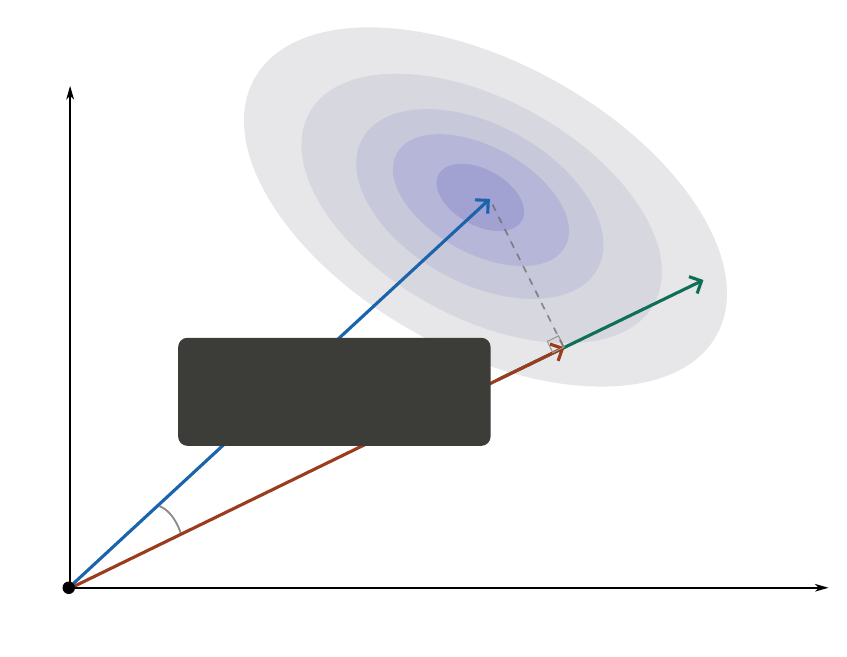}
    \put(79,42){\fontsize{7}{8}\selectfont $\rho_\mathcal{D}$}
    \put(79,38){\fontsize{7}{8}\selectfont (donor QV)}
    \put(56,62){\fontsize{7}{8}\selectfont $\rho_\mathcal{R}$}
    \put(56,58){\fontsize{7}{8}\selectfont (\emph{ideal} receiver QV)}
    \put(0,4.5){\fontsize{7}{8}\selectfont $\theta_\mathcal{R}$}
    \put(94,5){\fontsize{7}{8}\selectfont $w_1$}
    \put(-2.2,65.5){\fontsize{7}{8}\selectfont $w_2$}
    \put(63.5,33){\fontsize{7}{8}\selectfont $\lambda^\ast \rho_\mathcal{D}$}
    \put(63.5,29){\fontsize{7}{8}\selectfont (optimal patch)}
    \put(17,18){\fontsize{7}{8}\selectfont $\gamma$}
    \put(24,34.3){\fontsize{7}{8}\selectfont \color{cream}Recovered gain}
    \put(26,29.8){\fontsize{7}{8}\selectfont \color{cream}$=\cos^2_{H_\mathcal{R}}\gamma$}
    \end{overpic}
    \caption{\textbf{Geometric view of donor patching.} The blue vector \({\rho}_{\mathcal R}\) is the receiver's own backbone QV (\emph{unknown} in our setting), the green ray is the donor direction \({\rho}_{\mathcal D}\), and the red vector \(\lambda^\star {\rho}_{\mathcal D}\) is the orthogonal projection of \({\rho}_{\mathcal R}\) onto that donor line. Proposition~\ref{prop:alignment_controls_transfer} states that the fraction of receiver-side QAT gain recovered by this best donor patch is exactly \(\cos^2\gamma\), where \(\gamma\) is the angle between \({\rho}_{\mathcal D}\) and \({\rho}_{\mathcal R}\). Illustration in whitened coordinates where the Hessian \(H_\mathcal{R}\) defines the local geometry.\label{fig:donor}
    }
\end{wrapfigure}
\paragraph{Geometric interpretation.}
The patching rule in Eq.~\ref{eq:qv_patching} admits a simple local interpretation that helps explain two empirical patterns that will recur in Section~\ref{sec:results}: first, some donor vectors transfer much better than others; second, the scaling factor \(\lambda\) can affect if a transfer is helpful or harmful. 

The next proposition makes this precise under a local quadratic model of the receiver's post-quantization objective.

\begin{proposition}[Alignment controls donor transfer]
\label{prop:alignment_controls_transfer}
Let \(g_{\mathcal R}(\delta)\) denote the receiver's post-quantization objective after a displacement \(\delta\), and assume that near the receiver's own quantization vector \(\rho_{\mathcal R}\) the objective is smooth enough (locally quadratic):
\begin{align*}
g_{\mathcal R}(\delta)
=
g_{\mathcal R}(\rho_{\mathcal R})
+
\frac12 \|\delta-\rho_{\mathcal R}\|_{H_{\mathcal R}}^2,
\\
\|u\|_{H_{\mathcal R}}^2 := u^\top H_{\mathcal R} u,
\qquad
H_{\mathcal R}\succ 0.
\end{align*}
For a donor quantization vector \(\rho_{\mathcal D}\), the best scaled donor patch is
\[
\theta_{\mathcal R} + \lambda^\star \rho_{\mathcal D},
\qquad
\lambda^\star
=
\frac{\rho_{\mathcal D}^\top H_{\mathcal R}\rho_{\mathcal R}}
     {\rho_{\mathcal D}^\top H_{\mathcal R}\rho_{\mathcal D}}.
\]
Moreover, the fraction of the receiver-side QAT gain recovered by this best donor patch is exactly
\[
\cos_{H_{\mathcal R}}^2(\rho_{\mathcal D},\rho_{\mathcal R}),
\qquad
\cos_H(u,v)
:=
\frac{u^\top Hv}
{\sqrt{u^\top Hu}\sqrt{v^\top Hv}}.
\]

Therefore:
\begin{enumerate}
    \item donor transfer helps if and only if \(\rho_{\mathcal D}\) is not \(H_{\mathcal R}\)-orthogonal to \(\rho_{\mathcal R}\);
    \item donor transfer matches receiver-side QAT if and only if \(\rho_{\mathcal D}\) is a nonzero scaling of \(\rho_{\mathcal R}\);
    \item under this local quadratic model, the best scaled donor patch cannot outperform the ideal receiver QV.
\end{enumerate}
\end{proposition}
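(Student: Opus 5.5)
The argument reduces to a one-dimensional quadratic minimization along the donor ray, combined with a Cauchy--Schwarz argument in the inner product $\langle u,v\rangle_{H} := u^\top H_{\mathcal R} v$. Since $H_{\mathcal R}\succ 0$, this is a genuine inner product and $\|\cdot\|_{H_{\mathcal R}}$ is its norm. I first fix what "receiver-side QAT gain" means. The unpatched receiver corresponds to $\delta=0$ and the ideal receiver QV to $\delta=\rho_{\mathcal R}$, so the gain is
\[
G := g_{\mathcal R}(0)-g_{\mathcal R}(\rho_{\mathcal R}) = \tfrac12\|\rho_{\mathcal R}\|_{H_{\mathcal R}}^2 .
\]
The gain recovered by a patch $\lambda\rho_{\mathcal D}$ is $G(\lambda):=g_{\mathcal R}(0)-g_{\mathcal R}(\lambda\rho_{\mathcal D})$. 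I assume $\rho_{\mathcal D}\neq 0$ and $\rho_{\mathcal R}\neq 0$ so that the ratio and the cosine are defined.

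\textbf{Optimal scale.} Expanding gives
\[
g_{\mathcal R}(\lambda\rho_{\mathcal D})-g_{\mathcal R}(\rho_{\mathcal R}) = \tfrac12\bigl(\lambda^2\|\rho_{\mathcal D}\|_{H}^2 - 2\lambda\langle\rho_{\mathcal D},\rho_{\mathcal R}\rangle_H + \|\rho_{\mathcal R}\|_H^2\bigr).
\]
This is a strictly convex quadratic in $\lambda$, because $\|\rho_{\mathcal D}\|_H^2>0$. Setting its derivative to zero yields the stated $\lambda^\star$, and this minimizer is unique. Geometrically, $\lambda^\star\rho_{\mathcal D}$ is the $H$-orthogonal projection of $\rho_{\mathcal R}$ onto $\operatorname{span}(\rho_{\mathcal D})$, matching Figure~\ref{fig:donor}.

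\textbf{Recovered fraction.} Substituting $\lambda^\star$, the residual is
\[
\tfrac12\Bigl(\|\rho_{\mathcal R}\|_H^2-\tfrac{\langle\rho_{\mathcal D},\rho_{\mathcal R}\rangle_H^2}{\|\rho_{\mathcal D}\|_H^2}\Bigr),
\]
which is the Pythagorean identity for the projection. Hence
\[
G(\lambda^\star) = \tfrac12\,\frac{\langle\rho_{\mathcal D},\rho_{\mathcal R}\rangle_H^2}{\|\rho_{\mathcal D}\|_H^2}, \qquad \frac{G(\lambda^\star)}{G}=\cos^2_{H_{\mathcal R}}(\rho_{\mathcal D},\rho_{\mathcal R}).
\]

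\textbf{Corollaries.} I treat the three items in turn.
\begin{enumerate}
\item "Helps" means some $\lambda$ has $G(\lambda)>0$. This holds iff $G(\lambda^\star)>0$, i.e. iff $\langle\rho_{\mathcal D},\rho_{\mathcal R}\rangle_H\neq0$. Conversely, if the inner product vanishes, then $G(\lambda)=-\tfrac12\lambda^2\|\rho_{\mathcal D}\|_H^2\le 0$ for every $\lambda$.
\item "Matches" means $\cos^2_H=1$. By the equality case of Cauchy--Schwarz in the $H$-inner product, this holds iff $\rho_{\mathcal D}$ and $\rho_{\mathcal R}$ are linearly dependent, i.e. $\rho_{\mathcal D}=c\rho_{\mathcal R}$ with $c\neq 0$; then $\lambda^\star=1/c$.
\item Cauchy--Schwarz gives $\cos^2_H\le1$, so $G(\lambda^\star)\le G$. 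Equivalently, $\rho_{\mathcal R}$ is the global minimizer of $g_{\mathcal R}$ under the quadratic model.
\end{enumerate}

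\textbf{Main obstacle.} The only delicate point is interpretive rather than technical: pinning down "recovered fraction" as a ratio of objective decreases measured from $\delta=0$, and handling the degenerate cases $\rho_{\mathcal D}=0$ or $\rho_{\mathcal R}=0$. I will state these conventions explicitly before the computation.
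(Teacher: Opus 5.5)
Your proposal is correct and follows the paper's proof essentially step for step. Both arguments do exact line minimization of the strictly convex quadratic along the donor ray, compare the recovered gain $\tfrac12\langle\rho_{\mathcal D},\rho_{\mathcal R}\rangle_{H}^2/\|\rho_{\mathcal D}\|_{H}^2$ against $\tfrac12\|\rho_{\mathcal R}\|_{H}^2$, and use Cauchy--Schwarz in the $H_{\mathcal R}$-inner product for the three corollaries. Your explicit treatment of the degenerate cases and the check that $G(\lambda)=-\tfrac12\lambda^2\|\rho_{\mathcal D}\|_H^2\le 0$ for every $\lambda$ in the orthogonal case are small additions that the paper leaves implicit.
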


\noindent\emph{Proof.} See Appendix~\ref{app:proof_cosine_squared_recovery}.

Proposition~\ref{prop:alignment_controls_transfer} turns donor patching into a projection problem in the receiver's local geometry; see Figure~\ref{fig:donor}. The optimal scale \(\lambda^\star\) is the projection coefficient of the receiver QV onto the donor direction, and the recoverable fraction of receiver-side QAT gain is exactly the squared cosine of the angle between the two in the \(H_{\mathcal R}\)-geometry (i.e. all inner products are weighted by \(H_\mathcal{R}\)).


This theoretical result captures the second-order term of the local transfer geometry, thus it is exact for a purely quadratic local model. One can further show that the same picture persists beyond that idealization: if the receiver objective is \emph{locally smooth} with Lipschitz Hessian near \(\rho_{\mathcal R}\), then the cosine-squared law remains accurate up to a cubic remainder. We refer to Appendix~\ref{sub:first_deviation_in_local_displacement} for the complete result with proof.

\begin{figure}
    \centering
    \includegraphics[width=1.0\linewidth]{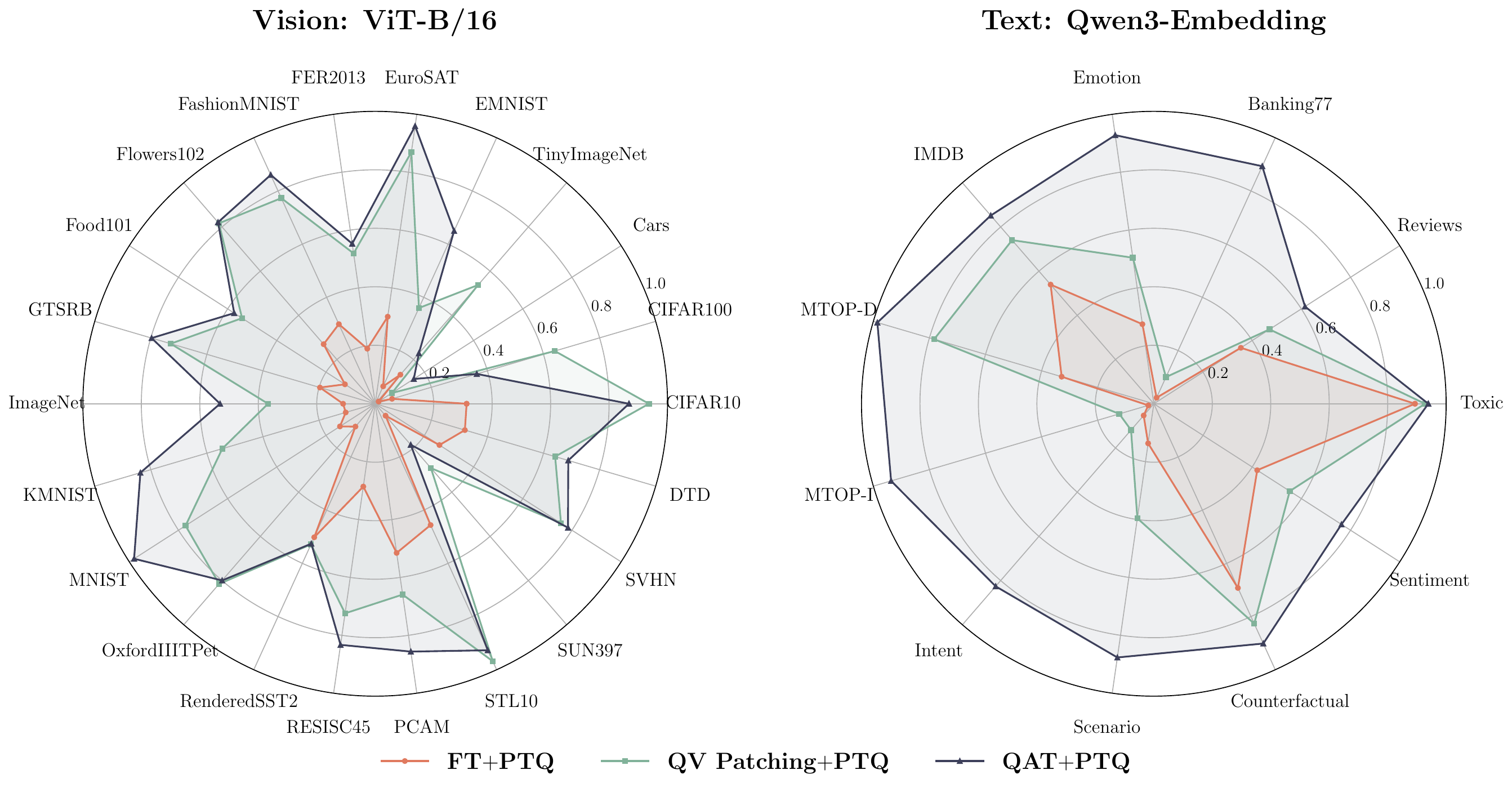}
    \caption{\textbf{QV patching across vision and text classifiers.} Receiver Top-1 accuracy under 3-bit PTQ for vanilla fine-tuned checkpoints (FT+PTQ), QV-patched checkpoints (QV Patching+PTQ), and receiver-side QAT checkpoints (QAT+PTQ) on \texttt{\texttt{ViT-B/16}} and \texttt{Qwen3-Embedding}.}
    \label{fig:timm_supervised_automodelforsequenceclassification_qv_patching}
\end{figure}
\subsection{Evaluation}
To assess the transferability of the QV across different tasks, we compare the downstream performance of the patched receiver checkpoint against the PTQ performance of the unpatched counterpart.
Specifically, we apply fake quantization to both models using Eq.~\ref{eq:quantize_dequantize_roundtrip} and measure Top-1 accuracy on the receiver task. 
For a donor-receiver pair \((\mathcal{D},\mathcal{R})\), we define the transfer gain as:
\begin{equation}
    \Delta(\mathcal{D},\mathcal{R}) = \mathrm{Acc}(\mathrm{FQ}(\theta_{\mathcal{R} \leftarrow \mathcal{D}})) - \mathrm{Acc}(\mathrm{FQ}(\theta_{\mathcal{R}})) \,.
    \label{eq:cross_task_transfer_metric}
\end{equation}
Here, \(\mathrm{Acc}(\cdot)\) denotes the Top-1 accuracy of a model on the evaluation set of the receiver task, and \(\mathrm{FQ}(\cdot)\) denotes the fake-quantization operator from Eq. \ref{eq:quantize_dequantize_roundtrip}, which applies 3-bit symmetric per-channel weight quantization to the model parameters.

For checkpoints trained with QAT, evaluation is always performed after removing the training-time QAT wrappers and applying the same PTQ operator used throughout the paper. This approach ensures that comparisons between FT, PTQ, and QAT are made under common inference-time conditions.
The transfer gain, denoted as \(\Delta\), provides a clear metric for assessing transfer success: a positive \(\Delta\) indicates that the transfer successfully improves robustness. 
In this context, the donor QV effectively mitigates PTQ-induced performance degradation in the receiver task. 
Conversely, a negative \(\Delta\) suggests that patching is harmful: the donor QV causes destructive interference in the receiver's parameter space, worsening performance relative to vanilla PTQ. 
A value of \(\Delta\) close to zero implies that patching has a negligible impact: the donor QV neither improves nor degrades the receiver's resilience to extremely low-bit PTQ. See Figures \ref{fig:baselines_timm_supervised_baseline_bar_all_models}, \ref{fig:baselines_open_clip_baseline_bar_all_models} and \ref{fig:automodelforsequenceclassification_baseline_bar_all_models} for detailed baseline performance.

\section{Experimental Setup}
\label{sec:exp_setup}
We evaluate QV patching on a broad benchmark spanning both vision and language modalities. In total, our experimental setting covers \textbf{14 pre-trained backbones}, finetuned on \textbf{22 vision} and \textbf{11 language} tasks, allowing us to assess the robustness of our method across model scales and architectures. The complete list of models and datasets is presented in Appendix~\ref{app:additional_ft_setup_details}.

\paragraph{Quantization Setup.} We study 3-bit weight-only quantization. Both PTQ and QAT target the weights of every \texttt{nn.Linear} PyTorch \citep{paszke2017automatic} module in the backbone,  excluding the classification head (\citep{torchao}). 
%
The biases, LayerNorm parameters, patch embeddings, and all non-linear modules are left intact. We employ symmetric channel-wise quantization but not activation quantization, activation smoothing, or rotation-based pre-processing \citep{xiao2023smoothquant, liu2024spinquant}. This design choice isolates the effect of QV transfer by preventing advanced pre-processing techniques from introducing confounding signals. 
\paragraph{Fine-Tuning Setup.} \label{sec:finetuning_setup} All weight-space arithmetic is performed between checkpoints sharing the same pretrained initialization. 
Each task is finetuned with and without QAT with identical optimizer, schedule, batch size, number of epochs, seed, and head initialization; the only difference is the insertion of fake-quantization modules during QAT. 
QVs are computed and transferred only on the shared backbone or encoder parameters, excluding task-specific heads. 
We evaluate this protocol on vision and text classifiers across both \textbf{learned-head} and \textbf{CLIP-style} \citep{radford2021learning} regimes. 
Complete dataset lists, model identifiers, and implementation details are provided in Appendix~\ref{app:additional_ft_setup_details}.
\section{Results}
\label{sec:results}
\begin{figure}
    \centering
    \includegraphics[width=0.99\linewidth]{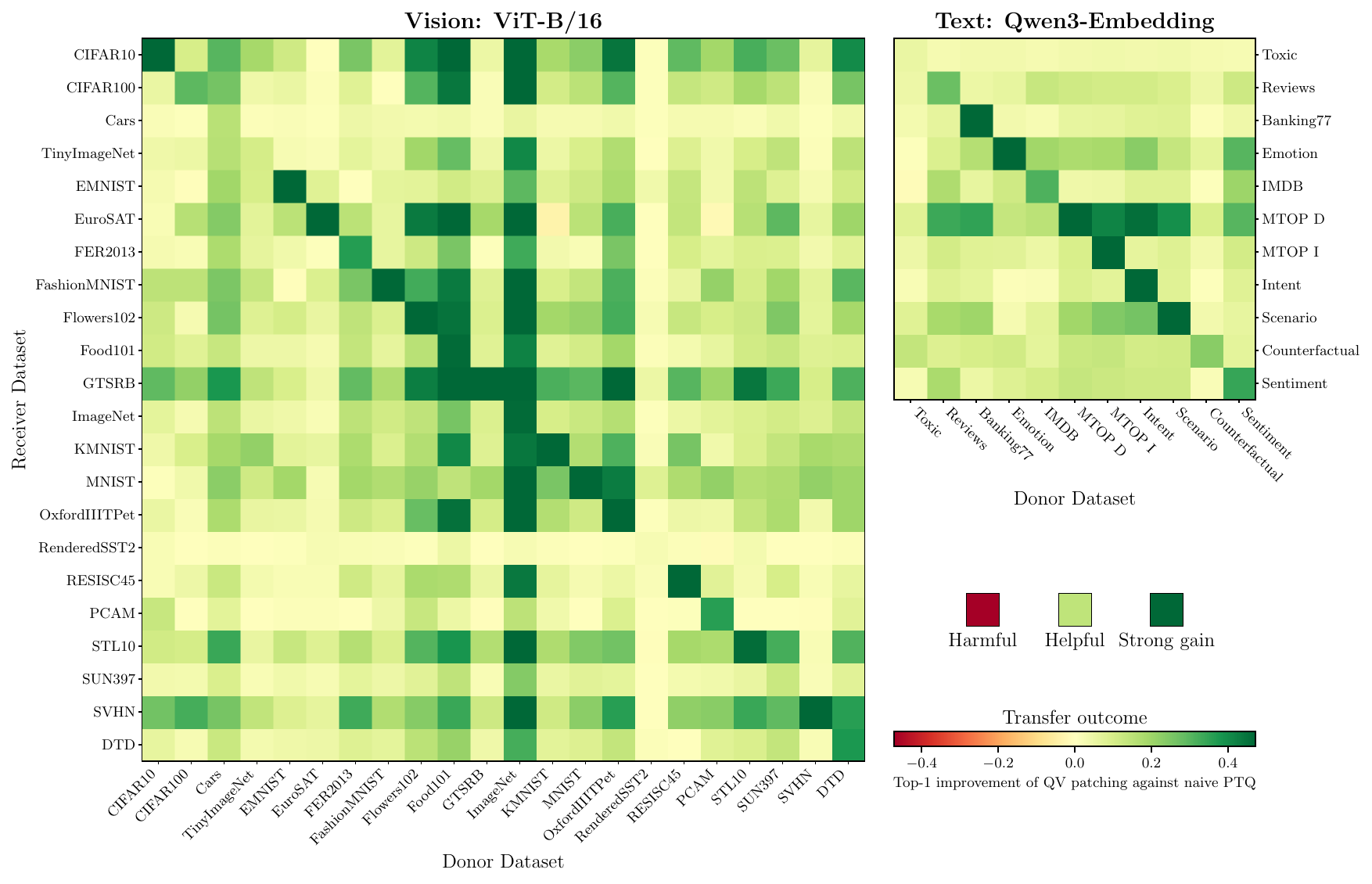}
    \caption{\textbf{Cross-task QV transfer for vision (left) and text (right) classifiers.} Top-1 accuracy change from patching receiver \(r\) with donor \(d\) quantization vector, relative to vanilla 3-bit PTQ. 
    }
    \label{fig:qv_transfer_heatmap_multimodal}
\end{figure}
%
%
\paragraph{Quantization Vector Direction.}
To isolate the effect of the QV direction from its magnitude, we first set the scaling factor \(\lambda=1\) from Eq.~\ref{eq:qv_patching}. 
Figures~\ref{fig:qv_transfer_heatmap_unit_vs_alpha_scale_timm_supervised_vit_b_16}, \ref{fig:qv_transfer_heatmap_unit_vs_alpha_scale_automodelforsequenceclassification_qwen3_embedding} provide distinct insights into transferability when respectively viewed row-wise or column-wise.
The rows show how a receiver dataset behaves when patched with available donors, revealing its receptiveness to robustness patching.
Conversely, the columns depict the effectiveness of a donor QV across all possible receivers, highlighting its generalizability and strength as a robustness injector.
Consequently, the intersection of row \(r\) and column \(d\) measures the net accuracy change when receiver \(r\) is patched with donor \(d\)'s QV, compared to the standard PTQ described in Eq.~\ref{eq:cross_task_transfer_metric}.

\paragraph{Quantization Vector Magnitude.}
Using a fixed-length vector across different tasks might cause the receiver model to either overshoot or undershoot the most useful region of weight space.
To investigate whether a suitable magnitude exists for each donor--receiver pair, we treat \(\lambda\) in Eq.~\ref{eq:qv_patching} as a scalar hyperparameter, similar to the use of task-vector coefficients in \citet{ilharco2022editing}.
This approach is consistent with Proposition~\ref{prop:alignment_controls_transfer}, which predicts that a donor direction can still be useful even when unit-scale transfer fails, as long as its optimal projection coefficient differs from 1.

In the updated protocol, \(\lambda\) is selected using only the receiver validation split.
For each donor--receiver pair, we sweep a fixed grid of candidate coefficients in $[0.15,\,1.5]$ on validation data with a step size of 0.15, choose the coefficient that maximizes validation Top-1 accuracy after 3-bit PTQ, and then report the corresponding performance on the receiver test split.
Formally, we select
$
\lambda^{\mathrm{val}}_{\mathcal D \to \mathcal R}
=
\arg\max_{\lambda \in \Lambda}
\mathrm{Acc}_{\mathrm{val}}
\left(
\mathrm{PTQ}(\theta_{\mathcal R}+\lambda\rho_{\mathcal D})
\right)$ and report test accuracy using this frozen value.
Seen through Proposition~\ref{prop:alignment_controls_transfer}, the validation
sweep estimates a practical projection coefficient for applying an already
useful donor direction to a given receiver.

The contrast with unscaled transfer is significant, as can be seen in Figures~\ref{fig:qv_transfer_heatmap_unit_vs_alpha_scale_timm_supervised_vit_b_16}, \ref{fig:qv_transfer_heatmap_unit_vs_alpha_scale_automodelforsequenceclassification_qwen3_embedding}.
First, validation-tuned scaling mitigates destructive interference.
The pronounced negative transfer areas observed under the \(\lambda=1\) protocol are substantially reduced when the magnitude is tuned.
In this setting, donor QVs rarely degrade the receiver's baseline PTQ
test-set performance. Second, validation-tuned scaling amplifies positive transfer.
For pairs that already exhibited gains at unit scale, calibrating \(\lambda\) yields additional improvements, suggesting that the QV often points toward a quantization-robust region, but the appropriate step length depends on the donor--receiver pair.
Overall, these results support a directional interpretation of QV transfer.
Negative transfer at \(\lambda=1\) is often due to an incorrect step size rather than a fundamentally incompatible donor direction.
At the same time, the calibrated setting is not fully data-free: unlike unit-scale patching, it uses a held-out validation split to select one scalar coefficient.
It remains receiver-training-free, since no receiver-side QAT or gradient-based
optimization is performed.

\paragraph{Highlights.} For each receiver task, we apply QV patching using the best donor task (excluding itself) and the best validation-tuned $\lambda$. For both vision and text domains, we find that QV patching achieves significant overall gains (see \textit{FT+PTQ} vs \textit{QV Patching + PTQ} in Figure \ref{fig:timm_supervised_automodelforsequenceclassification_qv_patching}), closely approaching the ideal, fully trained QAT+PTQ setting in vision and \textbf{even surpassing it on a few occasions}.

\paragraph{Remark.} Because Theorem~\ref{thm:qv_well_defined} rules out generic gauge mismatch between matched FT and QAT endpoints in our regime, the results shown in Figure~\ref{fig:timm_supervised_automodelforsequenceclassification_qv_patching} should be interpreted as genuine  geometry transfer rather than as an artifact of subtracting incompatible checkpoints.

In Figure \ref{fig:qv_transfer_heatmap_multimodal}, we find that the performance gain across all donor-receiver pairs is non-negative if the proper $\lambda$ is applied, highlighting the generalizability of the method.
Interestingly, ImageNet emerges as a universal donor. We hypothesize that this stems from the breadth and diversity of tasks (over 20k classes), which induces activation and weight distributions with broader dynamic ranges and more representative quantization patterns. As a result, the corresponding QV captures outlier structures that are not overly specialized to a single downstream task, making it effective as a transferable correction signal. Refer to Appendix~\ref{app:extended_results} for an extended analysis of results.

\section{Conclusions}
In this work, we investigated whether the robustness acquired during QAT can be isolated as a transferable property.
We introduced the quantization vector (QV), defined as the weight-space displacement between a standard fine-tuned checkpoint and its QAT-enabled counterpart.
Through extensive cross-task evaluations on vision and language models, we demonstrated that patching a standard model with a donor QV can substantially improve its robustness to 3-bit post-training quantization.

More broadly, our results suggest that QAT can induce a stable, partially transferable direction in weight space.
This expands the scope of weight-space arithmetic.
While previous work has largely focused on manipulating semantic capabilities, such as tasks, languages, or styles, our results show that weight-space directions can also encode structural and computational properties, like resilience to low-precision quantization.

The two theoretical results clarify why this picture is coherent.
Theorem~\ref{thm:qv_well_defined} shows that, in our regime, the QV is generically a genuine displacement in parameter space rather than an artifact of hidden symmetry mismatch, while Proposition~\ref{prop:alignment_controls_transfer} explains why successful transfer depends on donor--receiver alignment and on the patch scale.
Consistent with this view, rescaling the donor QV removes most destructive interference and reveals that much of the useful transfer signal lies in the direction itself.

Taken together, these results support a simple perspective: part of quantization robustness can be isolated, stored as a weight-space object, and reused across different tasks without the need to repeat full receiver-side QAT.
This suggests a broader understanding of low-bit adaptation, where robustness is not only trained, but can also be transferred.

\bibliographystyle{plainnat}
\bibliography{bib, datasets}

\newpage
\null

\appendix

\section{Limitations}
\label{sec:limitations}
Our study represents a foundational step toward transferring quantization robustness, yet certain aspects require further exploration. 
Theorem~\ref{thm:qv_well_defined} addresses the well-defined nature of the displacement itself but does not solve the separate problem of selecting \(\lambda\) without held-out calibration data.

The validation sweep over \(\lambda\) suggests that a significant portion of the useful transfer signal lies in the QV direction, yet it does not establish universality. In practical deployment scenarios, a minimal held-out calibration set should be used rather than the test set.
Furthermore, to isolate the effect of the QV without introducing confounding variables, we intentionally employed a basic PTQ setup.
Consequently, the interplay between QV patching and more complex PTQ techniques, which might inherently modify weight or activation distributions, remains an open question.
We acknowledge these limitations and view them as opportunities for potential future work.

\section{Theorems and proofs}
%

\subsection{Proof of Proposition~\ref{prop:alignment_controls_transfer}}
\label{app:proof_cosine_squared_recovery}
The proof is exact line minimization of a positive-definite quadratic in the receiver's local geometry.
\begin{proof}
Because \(\theta_{\mathcal R}\) is fixed, it is enough to optimize over donor-induced
displacements of the form
\[
\delta(\lambda)=\lambda \rho_{\mathcal D}.
\]
Define the \(H_{\mathcal R}\)-inner product and norm by
\[
\langle u,v\rangle_{H_{\mathcal R}} := u^\top H_{\mathcal R} v,
\qquad
\|u\|_{H_{\mathcal R}}^2 := u^\top H_{\mathcal R} u.
\]
Under the local quadratic model,
\[
g_{\mathcal R}(\delta)
=
g_{\mathcal R}(\rho_{\mathcal R})
+
\frac12\|\delta-\rho_{\mathcal R}\|_{H_{\mathcal R}}^2,
\]
so along the donor line we have
\[
g_{\mathcal R}(\lambda\rho_{\mathcal D})
=
g_{\mathcal R}(\rho_{\mathcal R})
+
\frac12\|\lambda\rho_{\mathcal D}-\rho_{\mathcal R}\|_{H_{\mathcal R}}^2.
\]
Expanding the square gives
\[
g_{\mathcal R}(\lambda\rho_{\mathcal D})
=
g_{\mathcal R}(\rho_{\mathcal R})
+
\frac12\Bigl(
\lambda^2\|\rho_{\mathcal D}\|_{H_{\mathcal R}}^2
-2\lambda\langle \rho_{\mathcal D},\rho_{\mathcal R}\rangle_{H_{\mathcal R}}
+\|\rho_{\mathcal R}\|_{H_{\mathcal R}}^2
\Bigr).
\]
Since \(H_{\mathcal R}\succ 0\) and \(\rho_{\mathcal D}\neq 0\), this is a strictly convex quadratic function of \(\lambda\), so it has a unique minimizer. Differentiating with respect to \(\lambda\) and setting the derivative to zero yields
\[
\lambda^\star
=
\frac{\langle \rho_{\mathcal D},\rho_{\mathcal R}\rangle_{H_{\mathcal R}}}
{\|\rho_{\mathcal D}\|_{H_{\mathcal R}}^2}
=
\frac{\rho_{\mathcal D}^\top H_{\mathcal R}\rho_{\mathcal R}}
{\rho_{\mathcal D}^\top H_{\mathcal R}\rho_{\mathcal D}}.
\]

Substituting \(\lambda^\star\) back into the quadratic gives
\[
g_{\mathcal R}(\lambda^\star\rho_{\mathcal D})
=
g_{\mathcal R}(\rho_{\mathcal R})
+
\frac12\left(
\|\rho_{\mathcal R}\|_{H_{\mathcal R}}^2
-
\frac{\langle \rho_{\mathcal D},\rho_{\mathcal R}\rangle_{H_{\mathcal R}}^2}
{\|\rho_{\mathcal D}\|_{H_{\mathcal R}}^2}
\right).
\]
Now the receiver-side QAT gain relative to leaving the receiver unpatched is
\[
g_{\mathcal R}(0)-g_{\mathcal R}(\rho_{\mathcal R})
=
\frac12\|\rho_{\mathcal R}\|_{H_{\mathcal R}}^2,
\]
while the gain obtained by the best donor patch is
\[
g_{\mathcal R}(0)-g_{\mathcal R}(\lambda^\star\rho_{\mathcal D})
=
\frac12
\frac{\langle \rho_{\mathcal D},\rho_{\mathcal R}\rangle_{H_{\mathcal R}}^2}
{\|\rho_{\mathcal D}\|_{H_{\mathcal R}}^2}.
\]
Therefore, the fraction of receiver-side QAT gain recovered by the best donor patch is
\[
\frac{
g_{\mathcal R}(0)-g_{\mathcal R}(\lambda^\star\rho_{\mathcal D})
}{
g_{\mathcal R}(0)-g_{\mathcal R}(\rho_{\mathcal R})
}
=
\frac{\langle \rho_{\mathcal D},\rho_{\mathcal R}\rangle_{H_{\mathcal R}}^2}
{\|\rho_{\mathcal D}\|_{H_{\mathcal R}}^2\,\|\rho_{\mathcal R}\|_{H_{\mathcal R}}^2}
=
\cos_{H_{\mathcal R}}^2(\rho_{\mathcal D},\rho_{\mathcal R}).
\]

The three conclusions now follow immediately.

First, donor transfer helps if and only if the recovered fraction is strictly positive, which holds if and only if
\[
\langle \rho_{\mathcal D},\rho_{\mathcal R}\rangle_{H_{\mathcal R}} \neq 0,
\]
that is, if and only if \(\rho_{\mathcal D}\) is not \(H_{\mathcal R}\)-orthogonal to \(\rho_{\mathcal R}\).

Second, donor transfer matches receiver-side QAT if and only if the recovered fraction equals \(1\), i.e. if and only if
\[
\cos_{H_{\mathcal R}}^2(\rho_{\mathcal D},\rho_{\mathcal R})=1.
\]
By equality in the Cauchy--Schwarz inequality for the \(H_{\mathcal R}\)-inner product, this happens if and only if \(\rho_{\mathcal D}\) is a nonzero scalar multiple of \(\rho_{\mathcal R}\).

Third, donor transfer never exceeds receiver-side QAT because
\[
\cos_{H_{\mathcal R}}^2(\rho_{\mathcal D},\rho_{\mathcal R}) \le 1,
\]
again by Cauchy--Schwarz.
\end{proof}


\subsection{First deviation in local displacement}\label{sub:first_deviation_in_local_displacement}

\begin{proposition}[The cosine-squared law is second-order accurate]\label{prop:cosine_squared_second_order}
Let \(g_{\mathcal R}\) be twice continuously differentiable near \(\rho_{\mathcal R}\), assume
\[
\nabla g_{\mathcal R}(\rho_{\mathcal R}) = 0,
\qquad
H_{\mathcal R}:=\nabla^2 g_{\mathcal R}(\rho_{\mathcal R}) \succ 0,
\]
and let \(\lambda^\star\) be the donor scale from Proposition~\ref{prop:alignment_controls_transfer}. Suppose that \(\nabla^2 g_{\mathcal R}\) is \(L_{\mathcal R}\)-Lipschitz on a neighborhood containing the line segments from \(\rho_{\mathcal R}\) to \(0\) and to \(\lambda^\star \rho_{\mathcal D}\). Then
\[
g_{\mathcal R}(0)-g_{\mathcal R}(\lambda^\star \rho_{\mathcal D})
=
\cos_{H_{\mathcal R}}^2(\rho_{\mathcal D},\rho_{\mathcal R})
\bigl[g_{\mathcal R}(0)-g_{\mathcal R}(\rho_{\mathcal R})\bigr]
+
\varepsilon_{\mathcal R},
\]
with
\[
|\varepsilon_{\mathcal R}|
\le
\frac{L_{\mathcal R}}{6}
\left(
\|\rho_{\mathcal R}\|^3
+
\|\lambda^\star \rho_{\mathcal D}-\rho_{\mathcal R}\|^3
\right),
\]
where \(\|\cdot\|\) denotes the Euclidean norm.

In particular, the first deviation from Proposition~\ref{prop:alignment_controls_transfer} is cubic in the local displacement scale.
\end{proposition}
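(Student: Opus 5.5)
The plan is to compare $g_{\mathcal R}$ with its quadratic model around $\rho_{\mathcal R}$,
\[
q_{\mathcal R}(\delta) := g_{\mathcal R}(\rho_{\mathcal R}) + \tfrac12\|\delta-\rho_{\mathcal R}\|_{H_{\mathcal R}}^2,
\]
at exactly two points, $\delta=0$ and $\delta=\lambda^\star\rho_{\mathcal D}$. We then import the exact algebra from the proof of Proposition~\ref{prop:alignment_controls_transfer}. That proof shows that for $q_{\mathcal R}$ (with $\lambda^\star$ defined from $H_{\mathcal R}=\nabla^2 g_{\mathcal R}(\rho_{\mathcal R})$, as here),
\[
q_{\mathcal R}(0)-q_{\mathcal R}(\lambda^\star\rho_{\mathcal D}) = \cos_{H_{\mathcal R}}^2(\rho_{\mathcal D},\rho_{\mathcal R})\bigl[q_{\mathcal R}(0)-q_{\mathcal R}(\rho_{\mathcal R})\bigr].
\]
We note $q_{\mathcal R}(\rho_{\mathcal R})=g_{\mathcal R}(\rho_{\mathcal R})$. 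We also write $r(\delta):=g_{\mathcal R}(\delta)-q_{\mathcal R}(\delta)$, so that $r(\rho_{\mathcal R})=0$.

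The key analytic step is the standard Taylor remainder bound for functions with Lipschitz Hessian. For $\delta$ such that the segment $[\rho_{\mathcal R},\delta]$ lies in the neighborhood, the hypotheses $\nabla g_{\mathcal R}(\rho_{\mathcal R})=0$ and $\nabla^2 g_{\mathcal R}(\rho_{\mathcal R})=H_{\mathcal R}$ give
\[
|r(\delta)| \le \tfrac{L_{\mathcal R}}{6}\|\delta-\rho_{\mathcal R}\|^3 .
\]
To prove it, set $\phi(t)=g_{\mathcal R}(\rho_{\mathcal R}+t(\delta-\rho_{\mathcal R}))$ and use the integral form
\[
\phi(1)=\phi(0)+\phi'(0)+\int_0^1(1-t)\phi''(t)\,dt .
\]
Since $\phi''(t)=u^\top\nabla^2 g_{\mathcal R}(\cdot)\,u$ with $u=\delta-\rho_{\mathcal R}$, we can subtract $u^\top H_{\mathcal R}u\int_0^1(1-t)\,dt$. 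The Lipschitz bound $|\phi''(t)-\phi''(0)|\le L_{\mathcal R}\,t\,\|u\|^3$ then yields $\int_0^1(1-t)t\,dt=1/6$. We apply this at $\delta=0$, where $\|\delta-\rho_{\mathcal R}\|=\|\rho_{\mathcal R}\|$, and at $\delta=\lambda^\star\rho_{\mathcal D}$. The hypothesis that both segments lie in the neighborhood is exactly what makes both applications legal. We use the operator norm of the Hessian difference, which is Lipschitz with respect to the Euclidean norm.

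Then we assemble. Writing $c:=\cos_{H_{\mathcal R}}^2(\rho_{\mathcal D},\rho_{\mathcal R})\in[0,1]$, we have
\[
g_{\mathcal R}(0)-g_{\mathcal R}(\lambda^\star\rho_{\mathcal D}) = c\bigl[q_{\mathcal R}(0)-g_{\mathcal R}(\rho_{\mathcal R})\bigr] + r(0)-r(\lambda^\star\rho_{\mathcal D}),
\]
and
\[
q_{\mathcal R}(0)-g_{\mathcal R}(\rho_{\mathcal R}) = g_{\mathcal R}(0)-g_{\mathcal R}(\rho_{\mathcal R}) - r(0).
\]
Hence
\[
\varepsilon_{\mathcal R}=(1-c)\,r(0)-r(\lambda^\star\rho_{\mathcal D}).
\]
Using $0\le 1-c\le 1$ (Cauchy--Schwarz, as in Proposition~\ref{prop:alignment_controls_transfer}) and the triangle inequality gives
\[
|\varepsilon_{\mathcal R}|\le \tfrac{L_{\mathcal R}}{6}\bigl(\|\rho_{\mathcal R}\|^3+\|\lambda^\star\rho_{\mathcal D}-\rho_{\mathcal R}\|^3\bigr).
\]

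The main obstacle is the bookkeeping of this last step. One must ensure that the coefficient in front of $r(0)$ is $(1-c)$, whose absolute value is at most $1$, rather than something like $1+c$. Otherwise the stated constant $L_{\mathcal R}/6$ would fail. The ``in particular'' claim follows by scaling: $\lambda^\star\rho_{\mathcal D}-\rho_{\mathcal R}$ is the $H_{\mathcal R}$-orthogonal residual, and its $H_{\mathcal R}$-norm is at most $\|\rho_{\mathcal R}\|_{H_{\mathcal R}}$. By norm equivalence, both terms are therefore $O(\|\rho_{\mathcal R}\|^3)$, with a constant depending on the condition number of $H_{\mathcal R}$. The leading term, by contrast, is quadratic in the displacement.
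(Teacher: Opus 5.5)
Your proposal is correct and follows essentially the same route as the paper: a Taylor expansion with integral remainder around $\rho_{\mathcal R}$ gives $|r(\delta)|\le \tfrac{L_{\mathcal R}}{6}\|\delta-\rho_{\mathcal R}\|^3$, which you apply at $0$ and at $\lambda^\star\rho_{\mathcal D}$, and then the identity $\varepsilon_{\mathcal R}=(1-c)\,r(0)-r(\lambda^\star\rho_{\mathcal D})$ with $0\le c\le 1$ yields the bound. Your norm-equivalence justification of the ``in particular'' clause, with a constant depending on the condition number of $H_{\mathcal R}$, is a small addition that the paper leaves implicit.
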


\begin{proof}
Write
\[
H := H_{\mathcal R},
\qquad
\rho := \rho_{\mathcal R},
\qquad
\eta := \rho_{\mathcal D},
\qquad
\delta^\star := \lambda^\star \eta.
\]
Since \(\nabla g_{\mathcal R}(\rho)=0\), Taylor's theorem with integral remainder around \(\rho\) gives, for every \(\delta\) on the relevant line segments,
\[
g_{\mathcal R}(\delta)
=
g_{\mathcal R}(\rho)
+
\frac12(\delta-\rho)^\top H(\delta-\rho)
+
r(\delta),
\]
where
\[
r(\delta)
=
\int_0^1
(1-t)\,
(\delta-\rho)^\top
\bigl(\nabla^2 g_{\mathcal R}(\rho+t(\delta-\rho))-H\bigr)
(\delta-\rho)\,dt.
\]
Because \(\nabla^2 g_{\mathcal R}\) is \(L_{\mathcal R}\)-Lipschitz,
\[
\bigl\|
\nabla^2 g_{\mathcal R}(\rho+t(\delta-\rho)) - H
\bigr\|
\le
L_{\mathcal R}\, t\, \|\delta-\rho\|.
\]
Therefore
\[
|r(\delta)|
\le
\int_0^1
(1-t)\,L_{\mathcal R}\,t\,\|\delta-\rho\|^3\,dt
=
\frac{L_{\mathcal R}}{6}\,\|\delta-\rho\|^3.
\]

Applying this expansion at \(\delta=0\) and \(\delta=\delta^\star\) yields
\[
g_{\mathcal R}(0)-g_{\mathcal R}(\delta^\star)
=
\frac12\|\rho\|_H^2
-
\frac12\|\delta^\star-\rho\|_H^2
+
r(0)-r(\delta^\star),
\]
where \(\|u\|_H^2 := u^\top H u\).

Now let
\[
c := \cos_H^2(\eta,\rho).
\]
By Proposition~\ref{prop:alignment_controls_transfer}, the quadratic part satisfies
\[
\frac12\|\rho\|_H^2
-
\frac12\|\delta^\star-\rho\|_H^2
=
c\,\frac12\|\rho\|_H^2.
\]
Also,
\[
g_{\mathcal R}(0)-g_{\mathcal R}(\rho)
=
\frac12\|\rho\|_H^2 + r(0).
\]
Substituting this into the previous identity gives
\[
g_{\mathcal R}(0)-g_{\mathcal R}(\delta^\star)
=
c\,[g_{\mathcal R}(0)-g_{\mathcal R}(\rho)]
+
\varepsilon_{\mathcal R},
\]
where
\[
\varepsilon_{\mathcal R}
=
(1-c)\,r(0)-r(\delta^\star).
\]

Since \(0\le c\le 1\),
\[
|\varepsilon_{\mathcal R}|
\le
|r(0)| + |r(\delta^\star)|.
\]
Using the remainder bound proved above at \(\delta=0\) and \(\delta=\delta^\star\), we obtain
\[
|\varepsilon_{\mathcal R}|
\le
\frac{L_{\mathcal R}}{6}
\left(
\|\rho\|^3 + \|\delta^\star-\rho\|^3
\right),
\]
which is exactly the claimed bound.
\end{proof}


\subsection{Quantization vectors are well-defined}
\label{app:wd}

Throughout this subsection, \(g^{\odot 2}\) denotes elementwise squaring, and all divisions are coordinatewise.

\begin{theorem}[AdamW kills continuous linear reparameterizations]
\label{thm:adamw_signed_permutation}
Consider an AdamW update of the form
\[
m^+ = \beta_1 m + (1-\beta_1) g,
\qquad
v^+ = \beta_2 v + (1-\beta_2) g^{\odot 2},
\qquad
\theta^+ = a_t \theta - b_t \frac{m^+}{\sqrt{v^+} + \varepsilon \mathbf 1},
\]
where \(a_t > 0\), \(b_t > 0\), \(\beta_1,\beta_2 \in (0,1)\), and \(\varepsilon > 0\).

Suppose an invertible linear map \(S \in GL_n(\mathbb R)\) on parameter coordinates is an exact linear equivariance of this update, in the sense that there exists a linear map \(R\) on second-moment coordinates such that, for all \(\theta,m,v,g\),
\[
U_t\!\left(S\theta,\; S^{-\top} m,\; Rv;\; S^{-\top} g\right)
=
\left(S\theta^+,\; S^{-\top} m^+,\; Rv^+\right).
\]
Then \(S\) is a signed permutation matrix. In particular, AdamW cannot admit a nontrivial continuous family of exact linear reparameterization symmetries.
\end{theorem}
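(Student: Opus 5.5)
The plan is to read off the constraints that equivariance imposes on each of the three state components in turn. The second-moment component pins down $S$ up to a permutation and a diagonal scaling, and the parameter component then forces that scaling to be $\pm1$. Write $A := S^{-\top}$, which is invertible. Throughout, I restrict to $v \ge 0$ coordinatewise, as the update requires for $\sqrt{v^+}$ to be defined. I also use that $m$, $v$ and $g$ range freely over their domains, so that $m^+$ and $v^+$ can be prescribed essentially arbitrarily; for instance, $m=v=0$ gives $m^+=(1-\beta_1)g$ and $v^+=(1-\beta_2)g^{\odot2}$.

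\textbf{Step 1: the second-moment equation.} Equivariance of the $v$-component reads $\beta_2 Rv + (1-\beta_2)(Ag)^{\odot 2} = R\bigl(\beta_2 v + (1-\beta_2) g^{\odot 2}\bigr)$. By linearity of $R$ this reduces to $R\,g^{\odot2} = (Ag)^{\odot 2}$ for all $g$. Expand the $i$-th coordinate of the right-hand side as the quadratic form $\sum_{j,k} A_{ij}A_{ik} g_j g_k$. The left-hand side contains only the squares $g_j^2$, so comparing coefficients of the cross monomials $g_jg_k$ with $j\neq k$ gives $A_{ij}A_{ik}=0$. Hence every row of $A$ has at most one nonzero entry. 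Since $A$ is invertible, $A = PD$ for a permutation matrix $P$ and an invertible diagonal matrix $D=\mathrm{diag}(d)$. Comparing the diagonal coefficients then gives $R w = P D^2 w$ for every $w$ of the form $g^{\odot2}$, i.e.\ on the nonnegative orthant. The orthant spans $\mathbb R^n$, so $R=PD^2$ everywhere. It follows that $S = A^{-\top} = P D^{-1}$, using $P^{-\top}=P$.

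\textbf{Step 2: the parameter equation.} The terms $a_t S\theta$ cancel on both sides, leaving
\[
\frac{A m^+}{\sqrt{R v^+}+\varepsilon\mathbf 1} = S\,\frac{m^+}{\sqrt{v^+}+\varepsilon\mathbf 1}.
\]
Substituting $A=PD$, $R=PD^2$ and $S=PD^{-1}$, the coordinate indexed by $\pi(j)$ becomes
\[
\frac{d_j m^+_j}{|d_j|\sqrt{v^+_j}+\varepsilon} = \frac{d_j^{-1} m^+_j}{\sqrt{v^+_j}+\varepsilon}.
\]
Choosing $v^+_j=0$ and $m^+_j\neq 0$ (take $v=0$, $g=0$, $m\neq0$) gives $d_j\varepsilon = d_j^{-1}\varepsilon$. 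Since $\varepsilon>0$, this forces $d_j^2=1$. Conversely, with $d_j=\pm1$ the identity holds for all inputs. Therefore $S=PD^{-1}$ is a signed permutation matrix.

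\textbf{Step 3: no continuous families.} The signed permutation matrices form a finite, hence discrete, subgroup of $GL_n(\mathbb R)$. Any continuous family of exact equivariances through the identity must therefore be constant, equal to the identity.

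The main obstacle I expect is Step 1's bookkeeping rather than any deep difficulty. It must be justified that the polynomial identity in $g$ can be compared coefficientwise, which holds because the identity holds on all of $\mathbb R^n$. It must also be justified that $R$ is determined off the orthant, which is the spanning argument above. Finally, the domain restriction $v\ge0$ has to be consistent, since $R=PD^2$ indeed maps the orthant into itself. It is also worth stating explicitly that it is the $\varepsilon>0$ term that rules out nonunit rescalings. With $\varepsilon=0$, Adam is invariant under diagonal rescaling $d_j$, because the $|d_j|$ factors cancel between the numerator and $\sqrt{R v^+}$.
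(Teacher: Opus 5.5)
Your proof is correct and follows the paper's route: the second-moment identity $R(g^{\odot 2})=(Ag)^{\odot 2}$ makes $A=S^{-\top}$ monomial with $R=PD^2$, and the parameter update then forces $|d_j|=1$. The one small difference is that you use only the point $v^+_j=0$, whereas the paper sweeps $v^+_j=t\ge 0$ and compares the $\sqrt{t}$ and constant coefficients.

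Your closing aside is mistaken, however. With $\varepsilon=0$ and $v^+_j>0$, the coordinate identity reads $\mathrm{sgn}(d_j)=d_j^{-1}$, which still forces $d_j=\pm1$. Adam's scale-freeness is invariance to rescaling the gradient, not equivariance under reparameterizing $\theta$, and the factor $d_j^{-1}$ from $S=PD^{-1}$ on the right-hand side is never cancelled. So $\varepsilon>0$ is needed only for your particular choice $v^+_j=0$, not for the conclusion.
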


\begin{proof}
Let \(T := S^{-\top}\). Equivariance of the second-moment update gives
\[
R\!\left(g^{\odot 2}\right) = (Tg)^{\odot 2}
\qquad \text{for all } g \in \mathbb R^n.
\]
We first show that every row of \(T\) has at most one nonzero entry. If row \(i\) had two nonzero entries \(a = T_{ij}\) and \(b = T_{ik}\) with \(j \neq k\), then for
\[
g = t e_j + e_k
\]
the \(i\)-th coordinate of the right-hand side would be
\[
\bigl((Tg)^{\odot 2}\bigr)_i = (a t + b)^2 = a^2 t^2 + 2ab\, t + b^2,
\]
while the left-hand side would be
\[
\bigl(R(g^{\odot 2})\bigr)_i
=
\bigl(R(t^2 e_j + e_k)\bigr)_i,
\]
which is affine in \(t^2\) and therefore contains no linear term in \(t\). Since \(ab \neq 0\), this is impossible. Hence, each row of \(T\) has at most one nonzero entry.

Because \(T\) is invertible, each column must also have exactly one nonzero entry. Therefore
\[
T = DP
\]
for some permutation matrix \(P\) and some diagonal matrix
\[
D = \mathrm{diag}(d_1,\dots,d_n)
\]
with each \(d_i \neq 0\). Since every vector in \(\mathbb R_{\ge 0}^n\) can be written as \(g^{\odot 2}\) for some \(g\), the identity
\[
R\!\left(g^{\odot 2}\right) = (Tg)^{\odot 2}
\]
implies
\[
R = D^2 P.
\]

Now use equivariance of the parameter update. Since the scalar factor \(a_t\) commutes with every linear map, the adaptive term must satisfy
\[
S\!\left(\frac{m}{\sqrt{v}+\varepsilon \mathbf 1}\right)
=
\frac{Tm}{\sqrt{Rv}+\varepsilon \mathbf 1}
\qquad \text{for all } m,v.
\]
Choose
\[
m = P^{-1} e_i,
\qquad
v = t\, P^{-1} e_i
\qquad (t \ge 0).
\]
Then
\[
Tm = d_i e_i,
\qquad
Rv = d_i^2 t\, e_i,
\qquad
S = T^{-\top} = D^{-1}P.
\]
Therefore the \(i\)-th coordinate of the previous identity becomes
\[
\frac{d_i^{-1}}{\sqrt{t}+\varepsilon}
=
\frac{d_i}{|d_i| \sqrt{t}+\varepsilon}
\qquad \text{for all } t \ge 0.
\]
Comparing the coefficients of \(\sqrt{t}\) and the constant terms forces
\[
|d_i| = 1.
\]
Since this holds for every \(i\), \(D\) has only \(\pm 1\) on the diagonal. Hence \(S\) is a signed permutation matrix. In particular, no nontrivial continuous linear family can satisfy the equivariance identity.
\end{proof}

\begin{lemma}[The fake quantizer preserves the remaining finite gauge]
\label{lem:fq_commutes}
Let \(\mathcal G\) be the finite permutation gauge left by the backbone parameterization. For every quantized linear weight tensor \(W\) and every \(g \in \mathcal G\),
\[
FQ(g \!\cdot\! W) = g \!\cdot\! FQ(W).
\]
Consequently, the STE forward rule used in QAT,
\[
W \longmapsto W + \bigl(FQ(W)-W\bigr)_{\mathrm{stopgrad}},
\]
commutes with the same action of \(g\).
\end{lemma}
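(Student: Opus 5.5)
The plan is to make explicit how a permutation $g\in\mathcal G$ acts on a quantized weight tensor, and then check that each stage of $\mathrm{FQ}$ is equivariant under that action. In a Transformer backbone, the permutation gauge acts on a linear weight $W\in\mathbb R^{d_{\mathrm{out}}\times d_{\mathrm{in}}}$ by $g\cdot W = P_{\mathrm{out}} W P_{\mathrm{in}}^\top$. Here $P_{\mathrm{out}}$ and $P_{\mathrm{in}}$ are permutation matrices, either of which may be the identity. Examples are hidden neurons of an MLP, heads inside attention projections, and the residual-stream permutation shared across layers. So it suffices to prove $\mathrm{FQ}(P_{\mathrm{out}} W P_{\mathrm{in}}^\top) = P_{\mathrm{out}}\,\mathrm{FQ}(W)\,P_{\mathrm{in}}^\top$ for arbitrary permutations $P_{\mathrm{out}},P_{\mathrm{in}}$.

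I will check the two factors separately. For the column permutation $P_{\mathrm{in}}$, the per-channel scale $s_i=\max_j|W_{ij}|/q_{\max}$ is a maximum over a row. This maximum is invariant under reordering the entries of that row, so every $s_i$ is unchanged. The map $W_{ij}\mapsto s_i\,\mathrm{clip}(\lfloor W_{ij}/s_i\rceil,q_{\min},q_{\max})$ acts entrywise with a row-dependent parameter. It therefore commutes with any reordering of entries within rows.

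For the row permutation $P_{\mathrm{out}}$ with underlying permutation $\pi$, row $i$ of $P_{\mathrm{out}}W$ is row $\pi^{-1}(i)$ of $W$. Its scale is therefore $s_{\pi^{-1}(i)}$, so the scale vector is permuted in the same way as the rows. The entrywise quantize–dequantize map then produces row $\pi^{-1}(i)$ of $\mathrm{FQ}(W)$, which gives the row part of the identity. Composing the two cases yields the full identity. Parameters that are not quantized, such as biases and LayerNorm weights, are left unchanged by $\mathrm{FQ}$. The equivariance of $\mathrm{FQ}(\theta)$ on the whole backbone therefore reduces to the tensorwise statement.

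For the STE consequence, I will note that $g$ acts linearly on weights. The forward value $W+(\mathrm{FQ}(W)-W)$ equals $\mathrm{FQ}(W)$, and by the first part this commutes with $g$. The stop-gradient surrogate has Jacobian equal to the identity, which also commutes with $g$. Hence the backward rule is equivariant too. Concretely, if $\tilde g$ is the gradient at $g\cdot W$ and the loss is $g$-invariant, then $\tilde g$ is $g$ applied to the gradient at $W$.

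The main obstacle is not the algebra but the scope of $\mathcal G$. The argument needs every element of $\mathcal G$ to permute output channels as whole rows, and to permute input coordinates only within rows. It must never mix or rescale rows. I will state this explicitly as the structural assumption on the gauge, namely that it is a finite group of permutations as in Theorem~\ref{thm:qv_well_defined}.

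A tie-breaking subtlety also needs handling: the argmax may not be unique, and rounding at exact half-integers needs a fixed convention. Both are harmless. The max value itself is permutation invariant even when the argmax is not unique, and the rounding rule is applied entrywise, so it is unaffected by where an entry sits. Degenerate zero rows with $s_i=0$ must be given a fixed convention, for example $\mathrm{FQ}=0$; any such convention is also entrywise and row-local, so equivariance still holds.
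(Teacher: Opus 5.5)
Your proposal is correct and follows essentially the same route as the paper. Row maxima are unchanged by within-row (column) permutations, row permutations just permute the per-channel scales, the rounding, clipping and rescaling steps are entrywise, and the STE forward map is affine in $W$ and $\mathrm{FQ}(W)$. Your version adds three things the paper leaves implicit: the explicit action $g\cdot W = P_{\mathrm{out}} W P_{\mathrm{in}}^\top$, the tie-breaking and zero-row conventions, and the backward-Jacobian remark.
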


\begin{proof}
The operator \(FQ\) used in this paper is built from row-wise absolute maxima, division by the resulting scales, element-wise rounding, element-wise clipping to a symmetric signed grid, and rescaling. Row and column permutations merely reorder entries and, in the row-wise case, the corresponding scales. Because the same symmetric quantization rule is then applied coordinate-wise, each of these operations commutes with the permutation actions in \(\mathcal G\). Hence
\[
FQ(g \!\cdot\! W) = g \!\cdot\! FQ(W).
\]
The STE rule is affine in \(W\) and \(FQ(W)\), so it commutes with the same action as well.
\end{proof}

\begin{proof}[\textbf{Proof of Theorem}~\ref{thm:qv_well_defined}]
The proof follows a standard real-analytic zero-set argument; see also \citet{nikolaou2026language} for a recent use of the same measure-zero template in a transformer setting.

Write the piecewise real-analytic assumption using a finite common partition of parameter space into full-dimensional cells on each of which both endpoint maps \(\Phi_\mathcal{D}^{\mathrm{FT}}\) and \(\Phi_\mathcal{D}^{\mathrm{QAT}}\) are real-analytic.

By Theorem~\ref{thm:adamw_signed_permutation}, matched AdamW fine-tuning cannot preserve any nontrivial continuous linear gauge. In the ViT regime considered here, the only exact symmetries that remain are the finite hidden-unit and attention-head relabelings collected in \(\mathcal G\). By Lemma~\ref{lem:fq_commutes}, the QAT fake quantizer commutes with that same finite permutation gauge, so the FT and QAT endpoint maps live in a common discrete-gauge setting.

Fix \(g \in \mathcal G \setminus \{e\}\). On each full-dimensional cell \(C\), define
\[
h_g(\theta)
:=
\left\|
\Phi_\mathcal{D}^{\mathrm{QAT}}(\theta) - g\,\Phi_\mathcal{D}^{\mathrm{FT}}(\theta)
\right\|^2.
\]
By construction, \(h_g\) is real-analytic on \(C\). By hypothesis, it is not identically zero on any such cell. Therefore its zero set on \(C\),
\[
Z_{g,C}
:=
\left\{
\theta \in C :
\Phi_\mathcal{D}^{\mathrm{QAT}}(\theta) = g\,\Phi_\mathcal{D}^{\mathrm{FT}}(\theta)
\right\},
\]
has Lebesgue measure zero.

Now consider the bad set
\[
\mathcal B_D
:=
\bigcup_{g \in \mathcal G \setminus \{e\}}
\left\{
\theta :
\Phi_\mathcal{D}^{\mathrm{QAT}}(\theta) = g\,\Phi_\mathcal{D}^{\mathrm{FT}}(\theta)
\right\}.
\]
This set is contained in the union of all cell boundaries and all zero sets \(Z_{g, C}\). The cell boundaries have measure zero, and each \(Z_{g,C}\) has measure zero. Since there are only finitely many cells and finitely many \(g \in \mathcal G\), \(\mathcal B_\mathcal{D}\) itself has measure zero.

Therefore, for every common initialization \(\theta_{\mathrm{pre}} \notin \mathcal B_\mathcal{D}\), the matched endpoints
\[
\theta_\mathcal{D} = \Phi_\mathcal{D}^{\mathrm{FT}}(\theta_{\mathrm{pre}}),
\qquad
\theta_{\mathcal{D},\mathrm{QAT}} = \Phi_\mathcal{D}^{\mathrm{QAT}}(\theta_{\mathrm{pre}})
\]
lie in the same gauge, and the difference
\[
\rho_\mathcal{D} = \theta_{\mathcal{D},\mathrm{QAT}} - \theta_\mathcal{D}
\]
is a well-defined parameter-space displacement.
\end{proof}


\section{Additional Fine-Tuning Setup Details}
\label{app:additional_ft_setup_details}
\begin{wraptable}{r}{6.2cm}
\vspace{-0.45cm}
\centering
\caption{\textbf{Vision datasets.} Dataset names, citations, and Hugging Face identifiers used in the vision experiments.}
\vspace{-0.2cm}
\label{tab:vision_dataset_hf_ids}
\small
\resizebox{0.45\textwidth}{!}{
\begin{tabular}{ll}
\toprule
\textbf{Dataset} & \textbf{Hugging Face identifier} \\
\midrule
Stanford Cars~\citep{krause_3d_2013} & \texttt{tanganke/stanford\_cars} \\
CIFAR-10~\citep{krizhevsky_learning_nodate} & \texttt{uoft-cs/cifar10} \\
CIFAR-100~\citep{krizhevsky_learning_nodate} & \texttt{uoft-cs/cifar100} \\
DTD~\citep{cimpoi_describing_2014} & \texttt{tanganke/dtd} \\
EMNIST~\citep{cohen_emnist_2017} & \texttt{tanganke/emnist\_letters} \\
EuroSAT~\citep{helber_eurosat_2019} & \texttt{tanganke/eurosat} \\
Fashion-MNIST~\citep{xiao_fashion-mnist_2017} & \texttt{zalando-datasets/fashion\_mnist} \\
FER2013~\citep{goodfellow_challenges_2013} & \texttt{clip-benchmark/wds\_fer2013} \\
Flowers102~\citep{nilsback_automated_2008} & \texttt{dpdl-benchmark/oxford\_flowers102} \\
Food-101~\citep{bossard_food-101_2014} & \texttt{ethz/food101} \\
GTSRB~\citep{stallkamp_german_2011} & \texttt{tanganke/gtsrb} \\
ImageNet~\citep{ILSVRC15} & \texttt{ILSVRC/imagenet-1k} \\
KMNIST~\citep{clanuwat_deep_2018} & \texttt{tanganke/kmnist} \\
MNIST~\citep{726791} & \texttt{ylecun/mnist} \\
Oxford-IIIT Pet~\citep{parkhi_cats_2012} & \texttt{timm/oxford-iiit-pet} \\
PCam~\citep{veeling_rotation_2018} & \texttt{1aurent/PatchCamelyon} \\
RESISC45~\citep{cheng_remote_2017} & \texttt{tanganke/resisc45} \\
Rendered SST-2~\citep{socher_recursive_nodate} & \texttt{nateraw/rendered-sst2} \\
STL-10~\citep{coates_analysis_2011} & \texttt{tanganke/stl10} \\
SUN397~\citep{xiao_sun_2016} & \texttt{tanganke/sun397} \\
SVHN~\citep{netzer_reading_nodate} & \texttt{ufldl-stanford/svhn} \\
Tiny ImageNet~\citep{le2015tiny} & \texttt{zh-plus/tiny-imagenet} \\
\bottomrule
\end{tabular}
}
\vspace{-0.55cm}
\end{wraptable}
Following \citet{ilharco2022editing}, all weight-space arithmetic is performed between checkpoints obtained from a shared pretrained initialization. 
For each task and architecture, we train matched full-precision and QAT checkpoints with identical settings, including optimizer, learning-rate schedule, batch size, number of epochs, total number of training steps, and random seed. 
The only difference between the two runs is that QAT inserts fake-quantization modules into the targeted linear layers during training. 
Additionally, when a task-specific classification head is trained, both the full-precision and QAT models share the same randomly initialized head.
We evaluate this protocol in both vision and text classification tasks. 
A complete list of datasets, along with their canonical citations and Hugging Face dataset identifiers, is provided here.

\paragraph{Vision Classifiers.}
For image classification, we construct classifiers by pairing a pretrained visual backbone with a task-specific classification head. 
We evaluate two head regimes. 
In the \emph{learned-head} regime, the classifier consists of a task-specific linear head initialized from a common random seed and trained jointly with the backbone. 
In the \emph{CLIP-style} regime \citep{radford2021learning}, the classifier uses a frozen bank of label-text embeddings to compute image-text logits, with only the visual backbone being fine-tuned.
For Vision Transformers \citep{dosovitskiy2020image}, we utilize \texttt{\texttt{ViT-B/16}}, \texttt{ViT-L/14}, and \texttt{ViT-H/14} under both the learned-head and CLIP-style regimes. 
In addition, for the learned-head regime, we also evaluate \texttt{Swin-B} and \texttt{Swin-L} \citep{liu2021swin}, as well as \texttt{DeiT-III-B} and \texttt{DeiT-III-L} \citep{touvron2022deit}. 
All vision models are fine-tuned at \(224\times224\) resolution.

\paragraph{Text Classifiers.}
We also apply the same fine-tuning protocol to text classification using pretrained text encoders. 
Specifically, we fine-tune \texttt{BERT-base}, \texttt{BERT-large} \citep{devlin2019bert}, \texttt{EmbeddingGemma-300M} \citep{vera2025embeddinggemma}, and \texttt{\texttt{Qwen3-Embedding}-0.6B} \citep{zhang2025qwen3} as the encoder backbones, each paired with a task-specific classification head.

In both vision and text experiments, quantization-vector subtraction and donor patching are applied only to the shared backbone or encoder parameters. 
Task-specific heads are never included in the quantization vector. 
Therefore, for a learned-head receiver, patching modifies the backbone or encoder while keeping the learned head fixed; for a CLIP-style receiver, patching modifies the visual backbone while keeping the frozen text classifier unchanged.
Following the work of \citet{ilharco2022editing} and \citet{gargiulo2025task}, arithmetic operations are performed only between checkpoints that share compatible parameterizations: the same architecture, the same pretrained initialization, and the same head regime. 
This compatibility is important because the quantization vector represents a coordinate-wise displacement in the shared backbone or encoder parameter space.

Table~\ref{tab:vision_dataset_hf_ids} presents the vision classification datasets utilized in the experiments, along with their respective Hugging Face identifiers. 
For the SVHN dataset, the experiments use the \texttt{cropped\_digits} configuration from \texttt{ufldl-stanford/svhn}.
Additionally, Table~\ref{tab:text_dataset_hf_ids} outlines the text classification datasets from MTEB \citep{muennighoff2023mteb, enevoldsenmmteb} and their Hugging Face identifiers.

Table~\ref{tab:vision_model_identifiers} provides a detailed list of the exact vision backbone identifiers, frameworks, head configurations, and pretrained versions utilized in the experiments. 
For CLIP-style vision experiments, models are loaded with the \texttt{open\_clip} framework~\citep{ilharco_gabriel_2021_5143773}. 
In contrast, for learned-head vision experiments, models employ the \texttt{timm} framework~\citep{rw2019timm}. 
Table~\ref{tab:text_model_identifiers} outlines the text encoders used for learned-head text classification, which are loaded with the \texttt{transformers} library~\citep{wolf-etal-2020-transformers}.
\begin{table}[]
\small
\centering
\caption{\textbf{Text datasets.} MTEB classification datasets, citations, and Hugging Face identifiers used in the text experiments.}
\label{tab:text_dataset_hf_ids}
\resizebox{\textwidth}{!}{
\begin{tabular}{ll}
\toprule
\textbf{Dataset} & \textbf{Hugging Face identifier} \\
\midrule
Emotion~\citep{saravia-etal-2018-carer} & \texttt{mteb/emotion} \\
IMDB~\citep{maas-etal-2011-learning} & \texttt{mteb/imdb} \\
Banking77~\citep{casanueva-etal-2020-efficient} & \texttt{mteb/banking77} \\
AmazonReviewsClassification~\citep{keung2020multilingual} & \texttt{mteb/AmazonReviewsClassification} \\
AmazonCounterfactualClassification~\citep{oneill-etal-2021-wish} & \texttt{mteb/amazon\_counterfactual} \\
MassiveIntentClassification~\citep{fitzgerald2022massive} & \texttt{mteb/amazon\_massive\_intent} \\
MassiveScenarioClassification~\citep{fitzgerald2022massive} & \texttt{mteb/amazon\_massive\_scenario} \\
MTOPDomainClassification~\citep{li-etal-2021-mtop} & \texttt{mteb/mtop\_domain} \\
MTOPIntentClassification~\citep{li-etal-2021-mtop} & \texttt{mteb/mtop\_intent} \\
ToxicConversationsClassification~\citep{jigsaw-unintended-bias-in-toxicity-classification} & \texttt{mteb/toxic\_conversations\_50k} \\
TweetSentimentExtractionClassification~\citep{tweet-sentiment-extraction} & \texttt{mteb/tweet\_sentiment\_extraction} \\
\bottomrule
\end{tabular}
}
\end{table}
\begin{table}[]
\centering
\caption{\textbf{Vision model identifiers.} Exact vision backbone identifiers, head regimes, frameworks, and pretrained versions used in the experiments.}
\label{tab:vision_model_identifiers}
\small
\resizebox{\textwidth}{!}{
\begin{tabular}{llll}
\toprule
\textbf{Head regime} & \textbf{Framework} & \textbf{Model identifier} & \textbf{Pretrained version} \\
\midrule
CLIP-style & \texttt{open\_clip} & \texttt{\texttt{ViT-B/16}} & \texttt{laion2b\_s34b\_b88k} \\
CLIP-style & \texttt{open\_clip} & \texttt{\texttt{ViT-L/14}} & \texttt{laion2b\_s32b\_b82k} \\
CLIP-style & \texttt{open\_clip} & \texttt{\texttt{ViT-H/14}} & \texttt{laion2b\_s32b\_b79k} \\
\midrule
Learned head & \texttt{timm} & \texttt{deit3\_base\_patch16\_224} & \texttt{fb\_in1k} \\
Learned head & \texttt{timm} & \texttt{deit3\_large\_patch16\_224} & \texttt{fb\_in1k} \\
Learned head & \texttt{timm} & \texttt{swin\_base\_patch4\_window7\_224} & \texttt{ms\_in22k\_ft\_in1k} \\
Learned head & \texttt{timm} & \texttt{swin\_large\_patch4\_window7\_224} & \texttt{ms\_in22k\_ft\_in1k} \\
Learned head & \texttt{timm} & \texttt{vit\_base\_patch16\_224} & \texttt{orig\_in21k} \\
Learned head & \texttt{timm} & \texttt{vit\_large\_patch16\_224} & \texttt{orig\_in21k} \\
Learned head & \texttt{timm} & \texttt{vit\_huge\_patch14\_224} & \texttt{orig\_in21k} \\
\bottomrule
\end{tabular}
}
\end{table}
\begin{table}[]
\centering
\caption{\textbf{Text encoder identifiers.} Exact Hugging Face identifiers used for learned-head text classification.}
\label{tab:text_model_identifiers}
\small
\begin{tabular}{ll}
\toprule
\textbf{Model} & \textbf{Hugging Face identifier} \\
\midrule
\texttt{BERT-base} & \texttt{google-bert/\texttt{BERT-base}-uncased} \\
\texttt{BERT-large} & \texttt{google-bert/\texttt{BERT-large}-uncased} \\
\texttt{\texttt{EmbeddingGemma}-300M} & \texttt{google/\texttt{\texttt{EmbeddingGemma}-300M}} \\
\texttt{\texttt{Qwen3-Embedding}-0.6B} & \texttt{Qwen/\texttt{\texttt{Qwen3-Embedding}-0.6B}} \\
\bottomrule
\end{tabular}
\end{table}
\section{Compute Setup}
\label{app:compute_setup}

Fine-tunings and evaluations were carried out using a mix of local and cloud machines. In all cases, single-GPU setups were used.
The local machines were equipped with 64\,GB of RAM, 8- or 16-core CPUs, and one of the following GPUs: an NVIDIA RTX 4090, RTX 3090, or RTX 3090\,Ti. 
The cloud machines were equipped with 128\,GB of RAM, 8 CPU cores, and an NVIDIA A100 (64\,GB).
All experiments were carried out using full precision (float32). 
The total compute budget was approximately 20{,}000 GPU-hours.
\section{Baselines}
\label{app:baselines}

\begin{figure}
    \centering
    \includegraphics[width=0.99\linewidth]{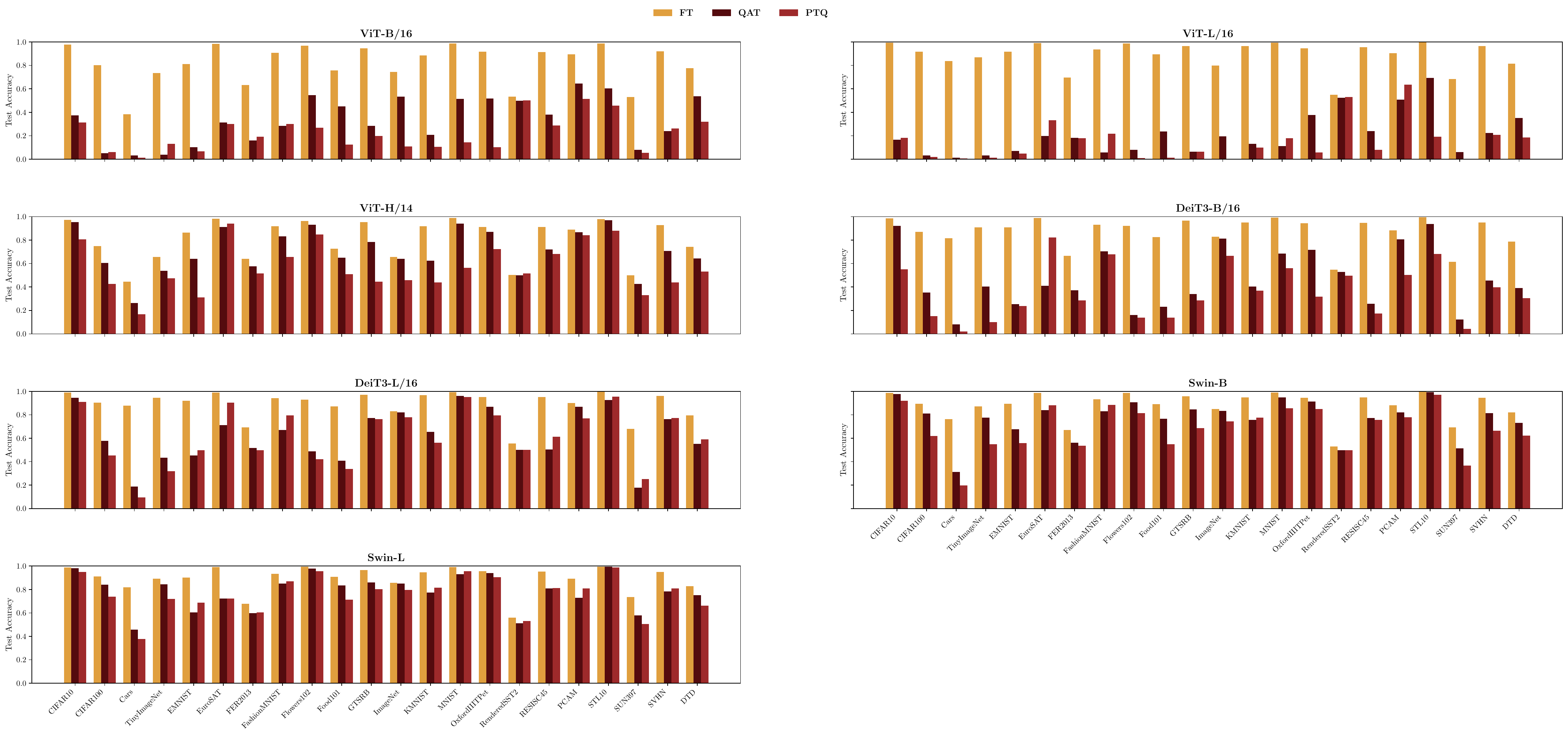}
    \caption{Comparison between accuracy of the full precision finetuned model (FT), the corresponding QAT and FP + PTQ, for the vision backbones used throughout the paper.}
    \label{fig:baselines_timm_supervised_baseline_bar_all_models}
\end{figure}
\begin{figure}
    \centering
    \includegraphics[width=0.99\linewidth]{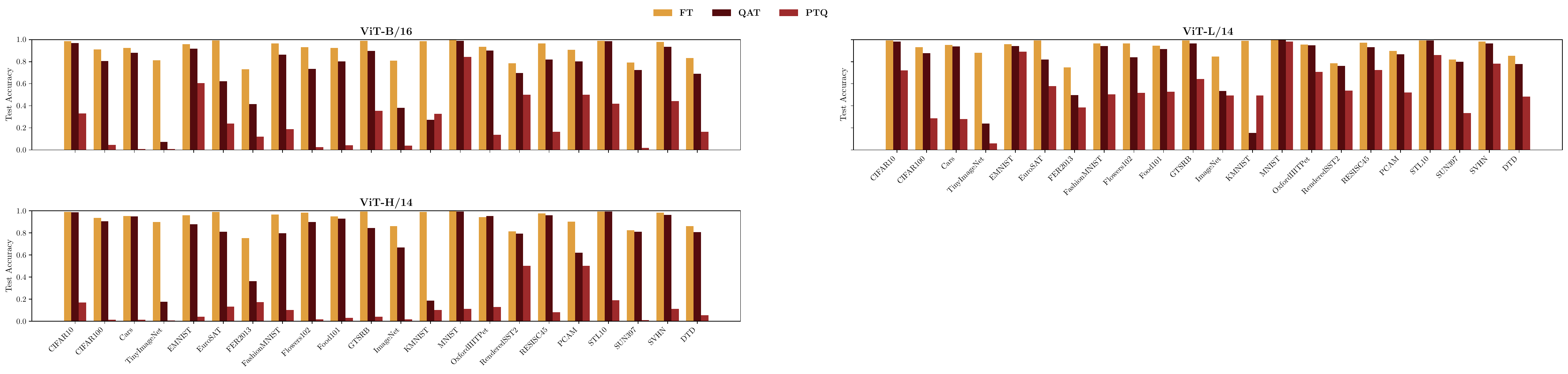}
    \caption{Comparison between accuracy of the full precision finetuned model (FT), the corresponding QAT and FP + PTQ, for the clip-style vision backbones used throughout the paper.}
    \label{fig:baselines_open_clip_baseline_bar_all_models}
\end{figure}
\begin{figure}
    \centering
    \includegraphics[width=0.99\linewidth]{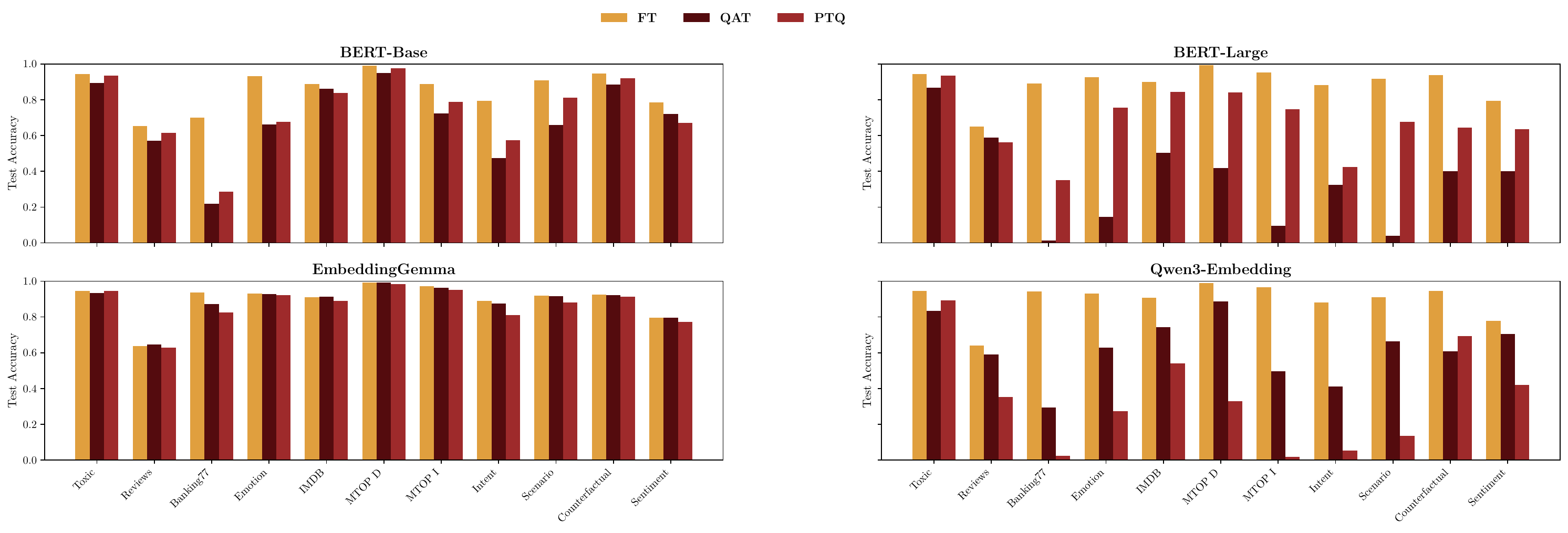}
    \caption{Comparison between accuracy of the full precision finetuned model, the corresponding QAT and FP + PTQ, for the text backbones used throughout the paper.}
    \label{fig:automodelforsequenceclassification_baseline_bar_all_models}
\end{figure}

Figures \ref{fig:baselines_timm_supervised_baseline_bar_all_models}, \ref{fig:baselines_open_clip_baseline_bar_all_models} and \ref{fig:automodelforsequenceclassification_baseline_bar_all_models} report the absolute Top-1 accuracies for all the models, including \texttt{\texttt{ViT-B/16}}, ViT-L/16, and \texttt{ViT-H/14} under both the learned-head and CLIP-style regimes, as well as for the learned-head regime of \texttt{Swin-B}, \texttt{Swin-L}, \texttt{DeiT-III-B}, and \texttt{DeiT-III-L} for vision and \texttt{BERT-base}, \texttt{BERT-large}, \texttt{\texttt{EmbeddingGemma}}, and \texttt{Qwen3-Embedding} for text, used under three different conditions: standard fine-tuning (FT), PTQ to 3-bit, and 3-bit QAT.
These baseline results serve two main purposes. First, they quantify the impact of the low-bit quantization regime addressed in this study. 
As shown by the images, across most datasets, PTQ induces a significant drop relative to full-precision (FT), confirming that 3-bit weight-only quantization is a challenging scenario in which robustness to quantization noise is not guaranteed. 
Second, the QAT results indicate that much of the performance lost due to PTQ can be regained when the model is explicitly trained to mitigate quantization effects. 
This establishes a notable gap between the results from PTQ and QAT, which our Quantization Vector aims to partially bridge through zero-shot transfer.

Additionally, the images also highlight that the quantization difficulty is highly task-dependent. 
Some datasets demonstrate relative stability under PTQ, while others suffer severe performance degradation, to the point of collapsing almost entirely, even though they perform strongly at FP. 
This variability motivates our donor-receiver analysis: if QAT-induced robustness can be transferred across tasks, then tasks that are particularly susceptible to PTQ may benefit from robustness directions extracted from more transferable donor tasks.

\section{Extended Results} 
\label{app:extended_results}
\begin{figure}[]
    \centering
    \includegraphics[width=0.99\linewidth]{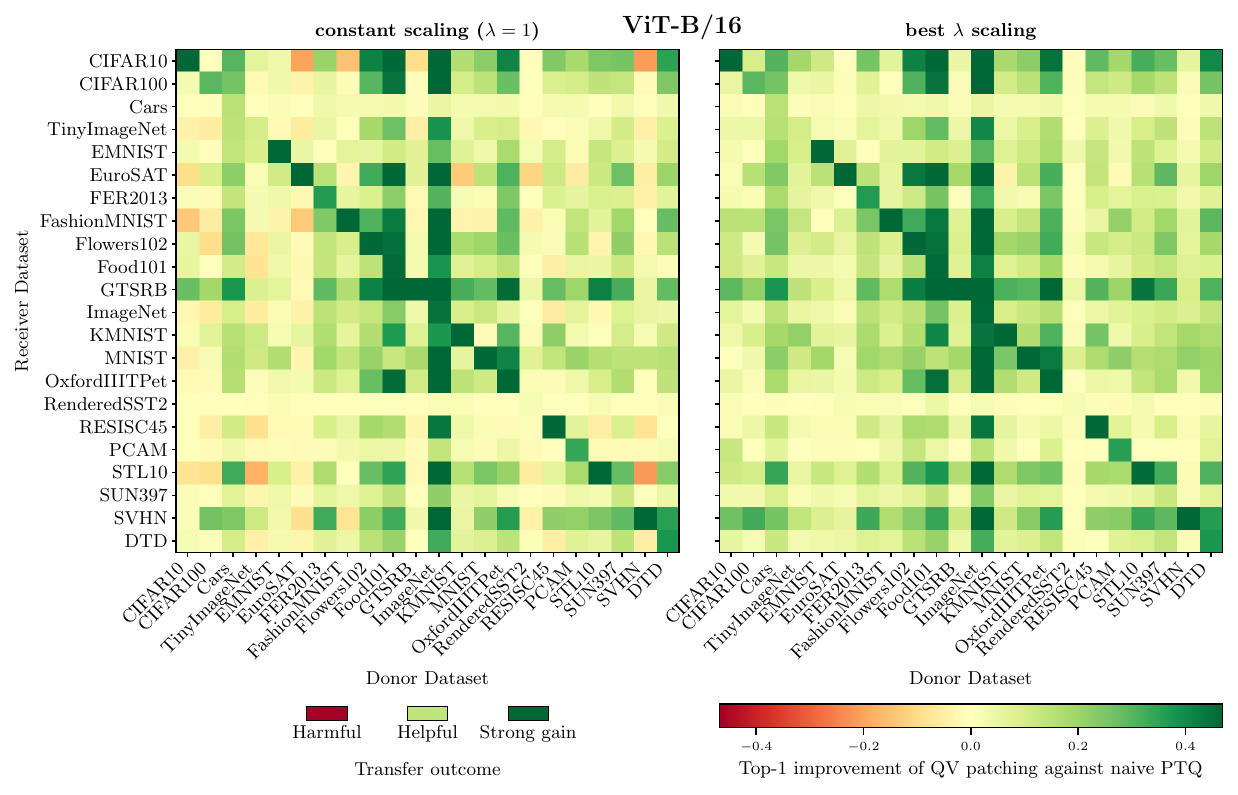}
    \caption{\textbf{QV transferability for learned-head \texttt{\texttt{ViT-B/16}} (vision)}. Top-1 accuracy change ($\Delta$) from patching receiver $r$ with donor $d$ quantization vector, relative to baseline 3-bit PTQ. For each architecture, heatmaps on the left show transfer using a constant scaling factor ($\lambda=1$), revealing both positive transfer and destructive interference. Heatmaps on the right demonstrate that modulating the magnitude $\lambda$ eliminates destructive interference while maximizing gains.}
\label{fig:qv_transfer_heatmap_unit_vs_alpha_scale_timm_supervised_vit_b_16}
\end{figure}
\begin{figure}[]
    \centering
    \includegraphics[width=0.99\linewidth]{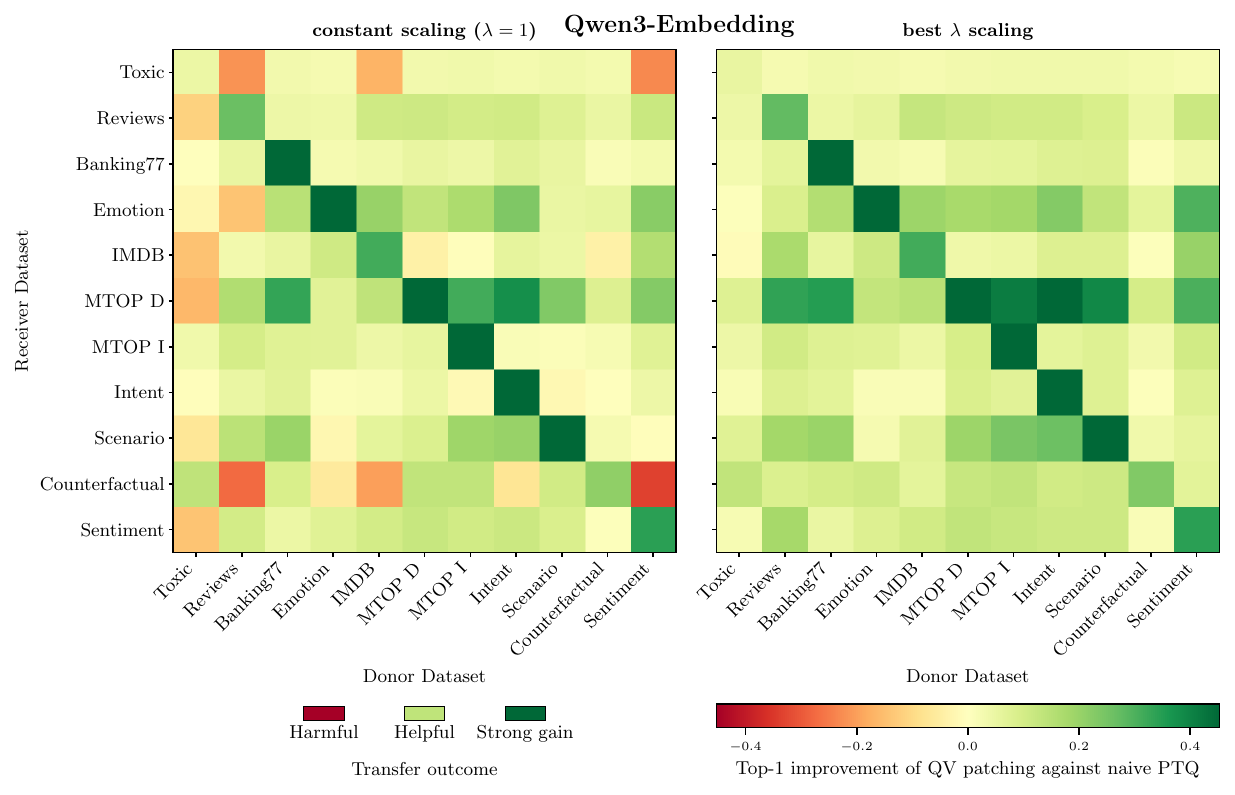}
    \caption{\textbf{QV transferability for \texttt{Qwen3-Embedding} (text)}. Top-1 accuracy change ($\Delta$) from patching receiver $r$ with donor $d$ quantization vector, relative to baseline 3-bit PTQ. For each architecture, heatmaps on the left show transfer using a constant scaling factor ($\lambda=1$), revealing both positive transfer and destructive interference. Heatmaps on the right demonstrate that modulating the magnitude $\lambda$ eliminates destructive interference while maximizing gains.}
\label{fig:qv_transfer_heatmap_unit_vs_alpha_scale_automodelforsequenceclassification_qwen3_embedding}
\end{figure}
\begin{figure}[]
    \centering
    \includegraphics[width=0.99\linewidth]{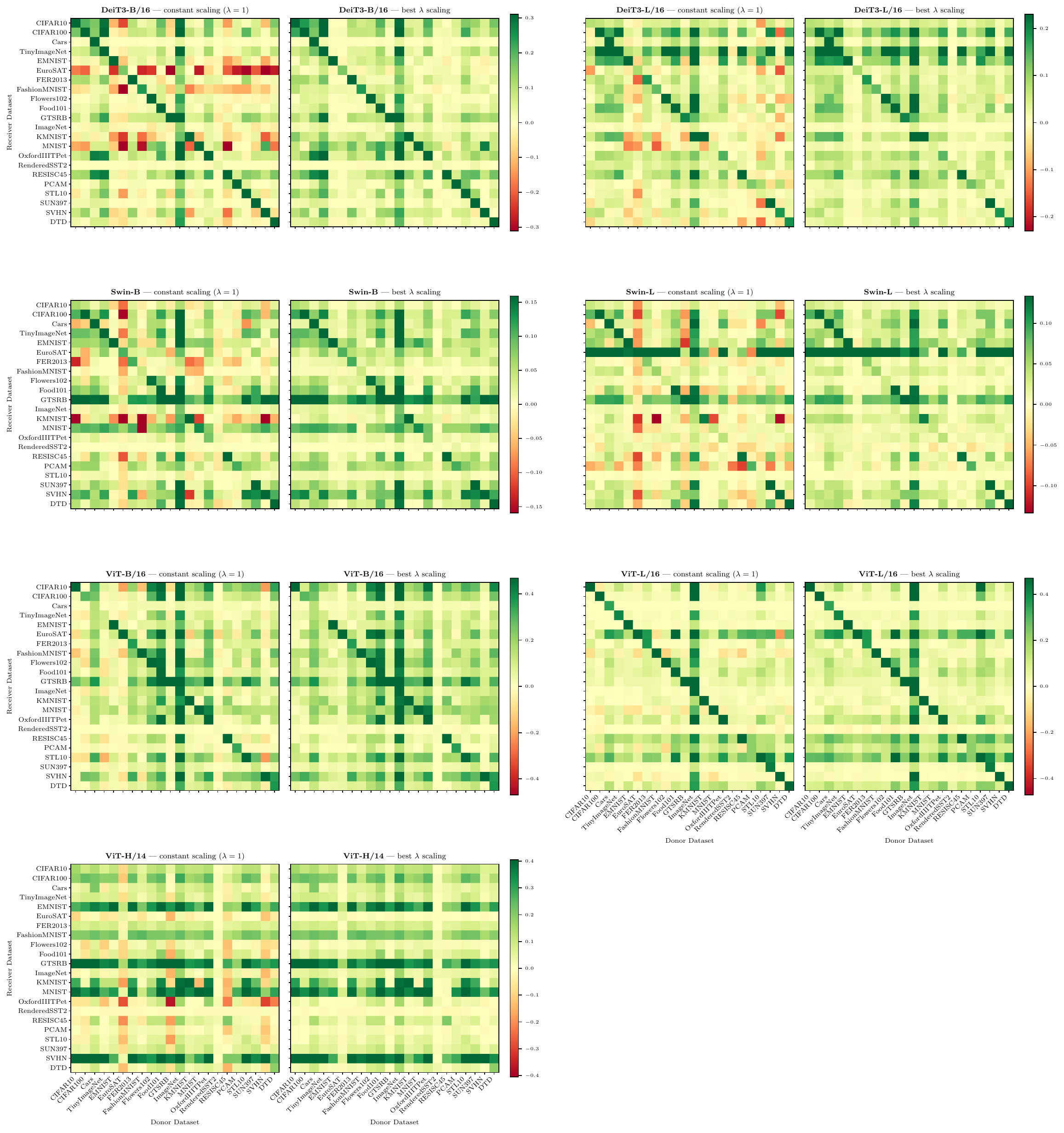}
    \caption{\textbf{QV transferability for vision (learned-head)}. Top-1 accuracy change ($\Delta$) from patching receiver $r$ with donor $d$ quantization vector, relative to baseline 3-bit PTQ. For each architecture, heatmaps on the left show transfer using a constant scaling factor ($\lambda=1$), revealing both positive transfer and destructive interference. Heatmaps on the right demonstrate that modulating the magnitude $\lambda$ eliminates destructive interference while maximizing gains.}
    \label{fig:timm_supervised_alpha_unit_vs_best_all_models}
\end{figure}
\begin{figure}[]
    \centering
    \includegraphics[width=0.99\linewidth]{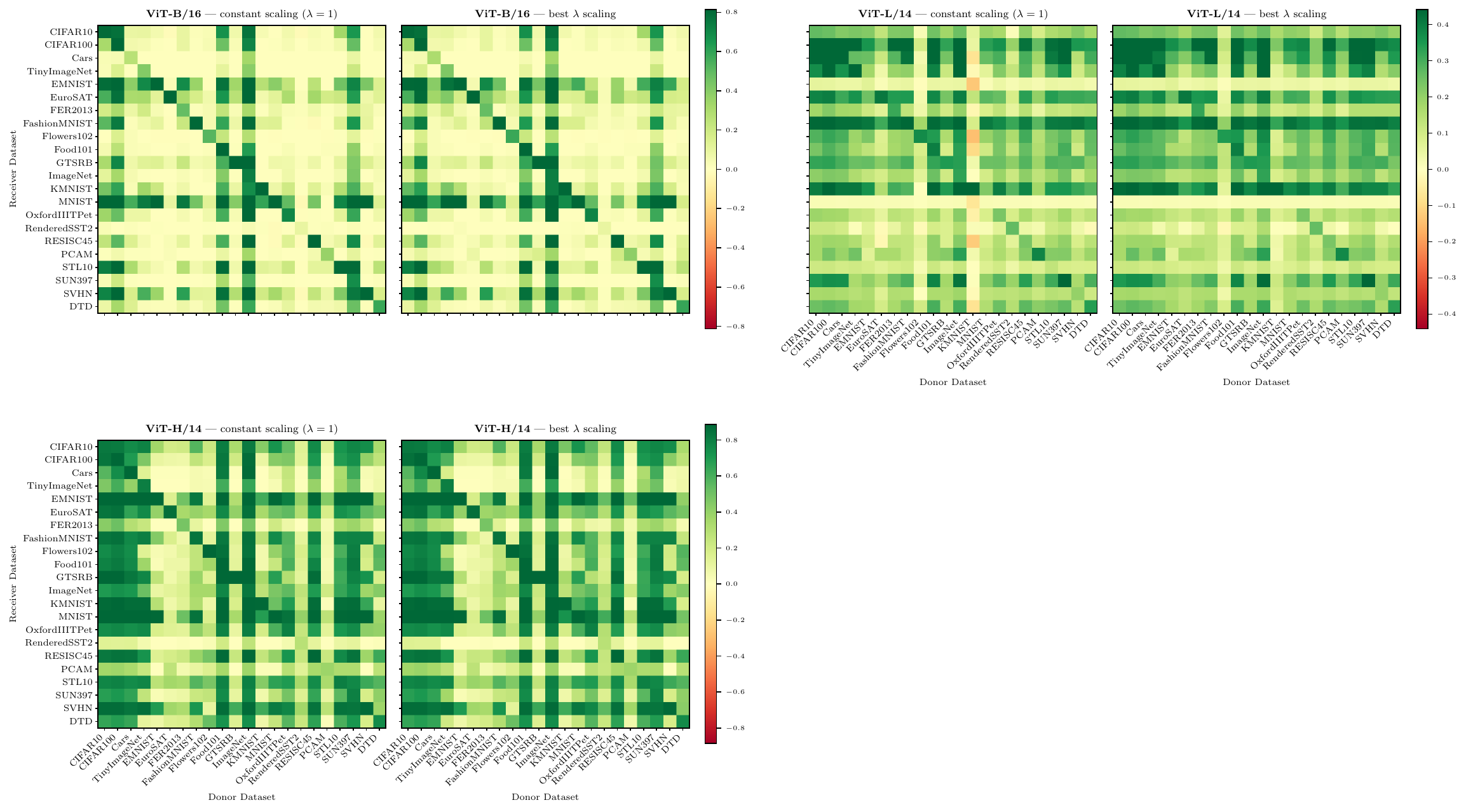}
    \caption{\textbf{QV transferability for vision (CLIP-style)}. Top-1 accuracy change ($\Delta$) from patching receiver $r$ with donor $d$ quantization vector, relative to baseline 3-bit PTQ. For each architecture, heatmaps on the left show transfer using a constant scaling factor ($\lambda=1$), revealing both positive transfer and destructive interference. Heatmaps on the right demonstrate that modulating the magnitude $\lambda$ eliminates destructive interference while maximizing gains.}
    \label{fig:open_clip_alpha_unit_vs_best_all_models}
\end{figure}
\begin{figure}[]
    \centering
    \includegraphics[width=0.99\linewidth]{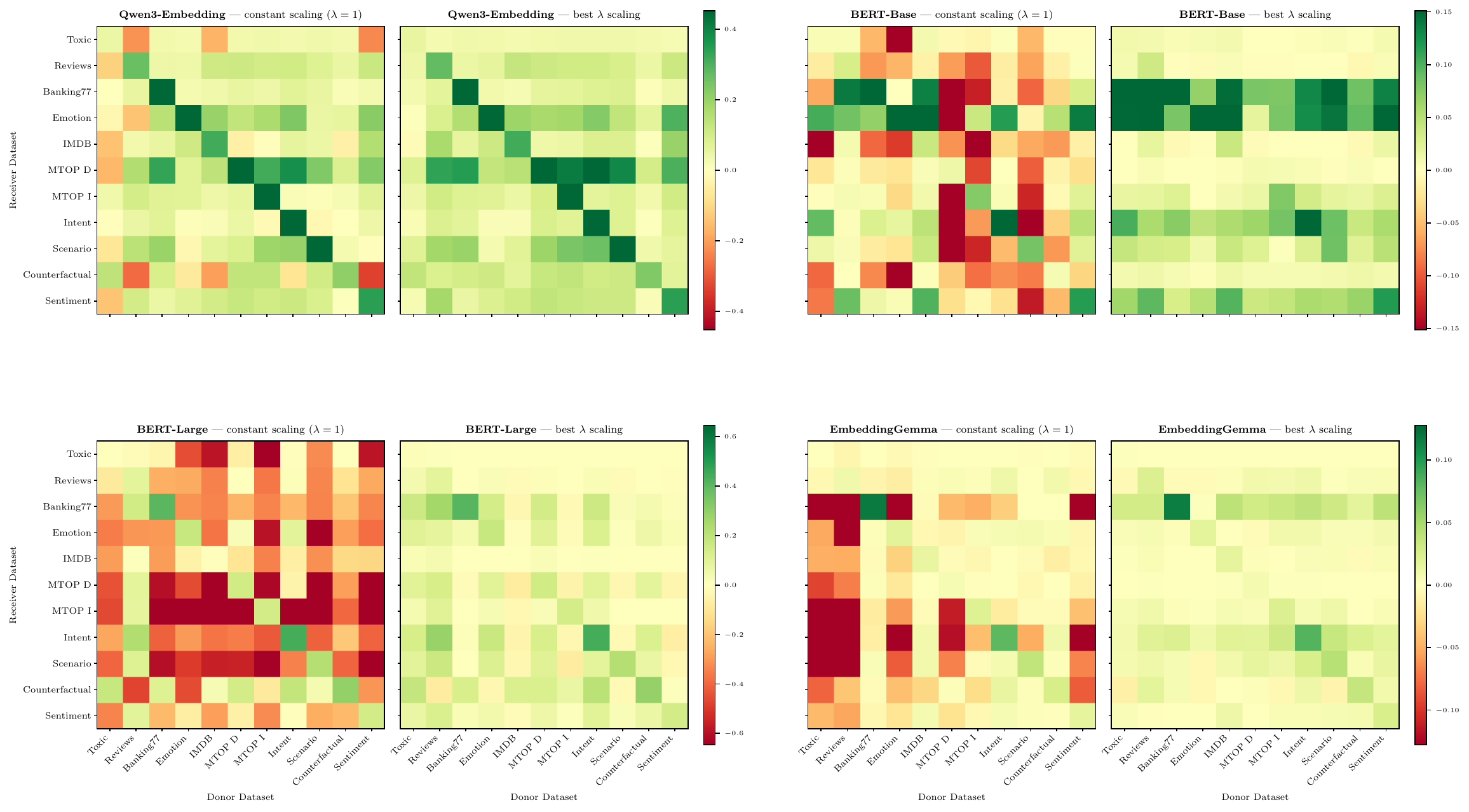}
    \caption{\textbf{QV transferability for text}. Top-1 accuracy change ($\Delta$) from patching receiver $r$ with donor $d$ quantization vector, relative to baseline 3-bit PTQ. For each architecture, heatmaps on the left show transfer using a constant scaling factor ($\lambda=1$), revealing both positive transfer and destructive interference. Heatmaps on the right demonstrate that modulating the magnitude $\lambda$ eliminates destructive interference while maximizing gains.}
    \label{fig:automodelforsequenceclassification_alpha_unit_vs_best_all_models}
\end{figure}

In this appendix, we provide the full set of cross-task QV transfer results
that complement the main analysis in Section~\ref{sec:results}. The main text
focuses on representative heatmaps and aggregate trends, while here we report
the corresponding results across all remaining architectures, head regimes, and
modalities.

\paragraph{Unit-scale transfer across all models.}

For vision models, unit-scale QV transfer is broadly beneficial. In the
CLIP-style setting, all reported backbones have positive mean transfer, with
average gains ranging from \(+18.7\%\) to \(+43.7\%\) and
positive-transfer rates above \(92\%\). The strongest gains appear for the
largest CLIP-style model, \texttt{ViT-H/14}, whose mean unit-scale transfer reaches
\(+43.7\%\). Learned-head vision models show the same
qualitative trend, although with smaller and less uniform gains. Their mean
unit-scale transfer remains positive for all architectures, ranging from
\(+1.3\%\) to \(+9.7\%\), with positive-transfer rates between
\(60.8\%\) and \(76.6\%\). These results indicate that, in vision, the raw QV
direction often already moves the receiver toward a more quantization-robust
region of weight space.

The text models exhibit a more heterogeneous pattern at unit scale. \texttt{Qwen3-Embedding} shows strong positive transfer, with a mean gain of \(+4.8\%\)
and \(78.2\%\) on positive-transfer pairs. By contrast,
\texttt{BERT-base}, \texttt{BERT-large}, and \texttt{EmbeddingGemma} have negative mean transfer at
\(\lambda=1\). The strongest failure case is \texttt{BERT-large}, whose unit-scale mean
transfer is \(-29.1\%\). This does not imply that text QVs are
not transferable; rather, it shows that applying the donor displacement with a
fixed unit magnitude can be unreliable in some text encoders. This observation
is consistent with Proposition~\ref{prop:alignment_controls_transfer}, which
predicts that a useful donor direction may still hurt if its scale is not
matched to the receiver geometry.

\paragraph{Effect of scale calibration.}
We next evaluate whether the negative transfer observed at \(\lambda=1\) can be
mitigated by tuning only the scalar coefficient in
Eq.~\ref{eq:qv_patching}. For each donor--receiver pair, we select the best
coefficient from the same fixed grid used in the main text and then evaluate the
patched receiver after 3-bit PTQ. We denote this selected coefficient by
\(\lambda^{\mathrm{best}}\).

The right-hand heatmaps in
Figures~\ref{fig:qv_transfer_heatmap_unit_vs_alpha_scale_timm_supervised_vit_b_16},
\ref{fig:qv_transfer_heatmap_unit_vs_alpha_scale_automodelforsequenceclassification_qwen3_embedding},
\ref{fig:timm_supervised_alpha_unit_vs_best_all_models},
\ref{fig:open_clip_alpha_unit_vs_best_all_models}, and
\ref{fig:automodelforsequenceclassification_alpha_unit_vs_best_all_models}
show the resulting transfer matrices after scale calibration on validation data. Compared with the
unit-scale setting, the calibrated setting substantially reduces destructive
interference and increases the reliability of transfer. This is especially
visible in the text experiments, where several models that were negative on
average at \(\lambda=1\) become positive after scale selection.

In learned-head vision models, the mean gain becomes
positive for every architecture and increases relative to unit-scale transfer.
For example, DeiT3-B/16 improves from a mean unit-scale gain of \(+1.7\%\) to \(+6.1\%\), \texttt{Swin-B} from \(+2.3\%\) to \(+4.3\%\), \texttt{\texttt{ViT-B/16}} from
\(+9.7\) to \(+13.2\), and \texttt{ViT-H/14} from \(+8.5\) to \(+11.2\). Positive-transfer
rates also increase substantially, reaching between \(87.7\%\) and \(95.5\%\)
across learned-head vision architectures.

The CLIP-style models remain the strongest regime. Scale calibration preserves
the large gains already observed at unit scale and further improves reliability.
The mean calibrated gains range from \(+18.1\%\) to \(+44.8\%\),
with positive-transfer rates between \(87.9\%\) and \(99.8\%\). In particular,
\texttt{ViT-H/14} reaches a mean gain of \(+44.8\%\), while \texttt{ViT-L/14}
achieves positive transfer in \(99.8\%\) of donor--receiver pairs. These results
suggest that CLIP-style visual backbones contain especially transferable
quantization-robust directions.

The effect of scale calibration is most pronounced in the text setting.
\texttt{BERT-base} changes from a negative mean transfer of \(-3.6\%\) at
unit scale to a positive mean transfer of \(+3.9\%\), while its positive-transfer
rate increases from \(35.5\%\) to \(85.5\%\). \texttt{BERT-large} changes from
\(-29.1\%\) to \(+4.0\), and its positive-transfer rate increases from \(15.5\%\)
to \(70.9\%\). \texttt{\texttt{EmbeddingGemma}} also becomes positive on average, although with a
smaller mean gain of \(+0.7\%\). \texttt{Qwen3-Embedding} remains the strongest text model,
improving from \(+4.8\%\) to \(+10.0\%\) mean transfer and reaching \(99.1\%\)
positive-transfer pairs. These results show that many negative unit-scale text
transfers are not caused by an absence of transferable QV structure, but by a
mismatch in the applied magnitude.

\paragraph{Interpretation.}
Taken together, the extended results support the geometric interpretation of QV
patching developed in Section~\ref{sec:qv}. Unit-scale transfer reveals whether
the donor QV direction is already useful without calibration. Scale-calibrated
transfer then shows that many apparent failures at \(\lambda=1\) can be
converted into positive transfer by adjusting only a single scalar coefficient.
This behavior matches the prediction of
Proposition~\ref{prop:alignment_controls_transfer}: donor transfer depends on
the alignment between donor and receiver QVs in the receiver's local geometry,
while the optimal magnitude depends on the corresponding projection coefficient.

\section{Societal Impacts}
\label{sec:societal_impacts}

\paragraph{Positive societal impacts.} By enabling zero-shot transfer of quantization robustness, this work reduces the compute, data, and engineering costs of deploying models at extremely low bit-widths. This can democratize efficient deployment for resource-constrained practitioners, lower the energy footprint of AI inference, and avoid the need to access sensitive task-specific data on the receiver side, a meaningful privacy advantage in domains like healthcare.

\paragraph{Negative societal impacts.} Cheaper deployment of compact models could accelerate adoption in sensitive domains (e.g., surveillance, automated decision-making) without proportionate investment in safety or fairness evaluation. Because patching is zero-shot and data-free, practitioners may deploy models with silently degraded accuracy on critical tasks if transfer quality is not carefully validated. Extensions beyond vision classifiers could amplify these risks.

\section{Safeguards}
\label{sec:safeguards}

The artifacts released with this work consist of quantization vectors, i.e. lightweight weight-space displacements, and the corresponding fine-tuned checkpoints for standard image classification tasks. 
These are derived from publicly available pretrained, well-established, and openly licensed classification models and datasets. 
As such, they do not introduce new risks of misuse beyond those already present in the underlying models and datasets. 
The quantization vectors themselves encode structural robustness to low-bit quantization rather than new task capabilities, and cannot be used in isolation to generate content or perform inference. 
We do \textit{not} release large-scale generative models, scraped datasets, or personally identifiable data.


\end{document}